\DeclareMathOperator{\sign}{sign}
\DeclareMathOperator*{\argmax}{arg\,max}
\newtheorem{proposition}{Proposition}
\begin{document}

\title{Deep Residual Flow for Out of Distribution Detection}

\author{Ev Zisselman\\
Department of Electrical Engineering\\
Technion\\
{\tt\small ev\textunderscore zis@campus.technion.ac.il}
\and
Aviv Tamar\\
Department of Electrical Engineering\\
Technion\\
{\tt\small avivt@technion.ac.il}
}

\maketitle

\begin{abstract}
The effective application of neural networks in the real-world relies on proficiently detecting out-of-distribution examples. Contemporary methods seek to model the distribution of feature activations in the training data for adequately distinguishing abnormalities, and the state-of-the-art method uses Gaussian distribution models. In this work, we present a novel approach that improves upon the state-of-the-art by leveraging an expressive density model based on normalizing flows. 
We introduce the \emph{residual flow}, a novel flow architecture that learns the residual distribution from a base Gaussian distribution. Our model is general, and can be applied to any data that is approximately Gaussian. For out of distribution detection in image datasets, our approach provides a principled improvement over the state-of-the-art.
Specifically, we demonstrate the effectiveness
of our method in ResNet and DenseNet architectures trained on various image datasets. For example, on a ResNet trained on CIFAR-100 and evaluated on detection of out-of-distribution samples from the ImageNet dataset, holding the true positive rate (TPR) at $95\%$, we improve the true negative rate (TNR) from $56.7\%$ (current state-of-the-art) to $77.5\%$ (ours). 
\end{abstract}

\section{Introduction}
Deep neural networks (DNNs) are powerful models that achieve high performance in various tasks in computer vision \cite{krizhevsky2012imagenet}, speech and audio recognition \cite{hinton2012deep}, and language processing \cite{Cho_learningphrase}. Leading DNN architectures are known to generalize well and achieve impressive performance when evaluated on samples drawn from the distribution observed at the training phase \cite{Cho_learningphrase, he2016deep, huang2017densely, krizhevsky2012imagenet, simonyan2014very}. However, DNNs tend to behave unexpectedly when encountering input taken from an unfamiliar distribution. In such instances, an out-of-distribution (OOD) input causes the majority of models to mispredict, often with high confidence \cite{Goodfellow2014ExplainingAH, lee2018simple, moosavi2017universal, nguyen2015deep, Szegedy2013IntriguingPO}. This behaviour poses a severe concern about the reliability of predictions made by DNNs and hinders their applicability to real-world scenarios \cite{amodei2016concrete}.

Contemporary work aimed at predicting classification uncertainty adopt an approach of constructing a confidence score based on characteristics of the feature space of trained neural networks. In \cite{hendrycks2016baseline}, Hendrycks and Gimpel propose a baseline method, which taps into features of the penultimate layer and uses the soft-max score as the confidence score. Their method is further improved by Liang et al. \cite{liang2017principled}, who incorporate the soft-max score with temperature scaling, alongside input pre-processing that emphasizes the score difference between in- and out-of-distribution samples.
The current state-of-the-art is the method of Lee et al. \cite{lee2018simple}, which models the feature distribution in different layers of a trained network by a Gaussian distribution under the LDA assumption (i.e., different mean but same covariance for different classes), and forms a confidence score for each layer based on the posterior distribution of the LDA model, averaged over different layers.
Lee's method shows superior performance compared with previous methods; in some cases surpassing by a large margin~ \cite{lee2018simple}. 

Building on the observation that a Gaussian model of network activations is an effective confidence measure, in this work we ask: can we improve OOD detection performance by using more expressive distributions of network activations? In particular, there is no reason to expect that features in mid-layers of the network follow an exact Gaussian distribution, and we expect that a more expressive model should capture their distribution more accurately.

We present a new approach for OOD detection and propose a more expressive density function, based on deep normalizing flow, for modeling the distribution of the feature space of trained neural networks. As a prelude, we posit that training a linear flow on the feature space of neural networks is equal to fitting a Gaussian distribution, as proposed in \cite{lee2018simple}. Then, we leverage this property to propose a novel flow architecture that adds a non-linear residual to the linear flow to produce a more expressive mapping.
The residual flow model is of independent interest, and should be effective for any data that is approximately Gaussian distributed. For out-of-distribution detection in image classification, modeling the network activations as a residual from Gaussian distribution allows us a principled improvement over the state-of-the-art, and in some cases yields superior performance by a large margin. Furthermore, the proposed residual flow model enables class-conditional density learning that improves performance, even in cases of limited training examples from each class (as in CIFAR100). Lastly, to make in- and out-of-distribution samples more separable, we extend the input preprocessing ideas of ~\cite{liang2017principled,lee2018simple} to our flow-based model, and perturb test samples to increase their likelihood under our model. We show that this perturbation can increase the contrast between in- and out-of-distribution samples, leading to further performance improvement.

We demonstrate the effectiveness of our method using trained convolutional neural networks such as DenseNet \cite{huang2017densely} and ResNet \cite{he2016deep}, trained on various datasets, and tested on various out-of-distribution examples. Our method outperforms the state-of-the-art method \cite{lee2018simple} for detecting out-of-distribution samples in all tested cases. For example, for a ResNet trained on CIFAR-100, we improve the true negative rate (TNR) of detecting samples from the LSUN dataset at a true positive rate (TPR) of 95$\%$ (i.e. 95$\%$ of the CIFAR-100 test images were correctly classified) from 38.4$\%$ \cite{lee2018simple} to 70.4$\%$ (ours), with all hyper-parameters tuned strictly from the training dataset. Our results demonstrate that the feature space of neural networks does not necessarily conform with a Gaussian distribution, and a more accurate model can significantly improve confidence estimates. 


\section{Background}
\label{sec:background}
We present preliminaries on normalizing flows and OOD detection.
\subsection{Normalizing Flows for Density Estimation}
Normalizing flows are an effective model for high-dimensional data distributions, originally studied in classical statistics~\cite{tabak2013family, tabak2010density}, and recently popularized in the deep learning community (e.g., NICE \cite{Dinh2014NICENI}, RealNVP \cite{Dinh2016DensityEU}, and GLOW \cite{kingma2018glow}). 
Let $x \in X$ denote data sampled from an unknown distribution $x \sim p_X(x)$. The main idea in normalizing flows is to represent $p_X(x)$ as a transformation of a Gaussian distribution $z \sim p_Z(z) = \mathcal{N}(0,I)$, i.e. $x = g(z)$.
Moreover, we assume the mapping to be bijective $x = g(z) = f^{-1}(z)$. As such, the data log-likelihood is given by the change of variable formula:
\begin{align}\label{eq:2}
\log\left(p_X(x)\right) =& \log\left(p_Z\left(f(x)\right)\right)\\
&+\log\left( \left|\det\left(\frac{\partial f(x)}{\partial x^T}\right)\right|\right),
\end{align}
where $\frac{\partial f(x)}{\partial x^T}$ is the Jacobian of the map $f(x)$ at $x$. The functions $f,g$ can be learned by maximum likelihood, where the bijectivity assumption allows to train expressive mappings, such as deep neural networks by backpropagation. Further, given a sample $x$, its likelihood can be inferred from \eqref{eq:2}.

To achieve a tractable, yet flexible Jacobian for the map $f(x)$, the authors of NICE \cite{Dinh2014NICENI} and RealNVP \cite{Dinh2016DensityEU} proposed to stack a sequence of simple bijective transformations, such that their Jacobian is a triangular matrix. This way, its log-determinant is simply determined by the sum of its diagonal elements. In NICE \cite{Dinh2014NICENI}, the authors proposed the \textit{additive coupling layer} for each transformation. This was further improved in RealNVP \cite{Dinh2016DensityEU} which proposed the \textit{affine coupling layer}. In each affine coupling transformation, the input vector $x\in \mathbb{R}^d$ is split into upper and lower halves,  $x_{1},x_{2} \in \mathbb{R}^{d/2}$. These are plugged into the following transformation, referred to as a single flow-block  $f_i$:
\begin{align}\label{eq:3}
z_1 = x_1,~~~ z_2 = x_2 \circ \exp(s_i(x_1)) + t_i(x_1),
\end{align}
where $\circ$ denotes element-wise multiplication, and $s_i$ and $t_i$ are non-linear mappings (e.g., deep neural networks) that need not be invertible.
Given the output $z_1$ and $z_2$, this affine transformation is trivially invertible by:
\begin{align}\label{eq:4}
x_1 = z_1,~~~ x_2 = (z_2 - t_i(z_1)) \circ \exp(-s_i(z_1)).
\end{align}
Let $r$ denote a switch-permutation, which permutes the order of $x_1$ and $x_2$. A RealNVP flow comprises $k$ reversible flow-blocks interleaved with switch-permutations,\footnote{The RealNVP paper \cite{Dinh2016DensityEU} also considered other types of permutations, such as checkerboard masks for 2-dimensional image input. Here, we focus on 1-dimensional data, and only consider the switch-permutation, which was first proposed in \cite{Dinh2014NICENI}. } 
\begin{equation}
    f_{RealNVP} = f_k\cdot r \dots f_2 \cdot r \cdot f_1.
\end{equation}
According to the chain rule, the log-determinant of the Jacobian of the whole transformation $f$ is computed by summing the log-determinant of the Jacobian of each $f_i$, making the likelihood computation \eqref{eq:2} tractable.

In GLOW~\cite{kingma2018glow}, additional permutations between flow-blocks are added, to reduce the structural constraint of separating the input into two halves:
\begin{equation}
    f_{GLOW} = f_k\cdot p_{k-1} \dots f_3\cdot p_2 \cdot f_2\cdot p_1 \cdot f_1,
\end{equation}
where $p_i$ are either fixed (random) or learned permutation matrices. Since permutations are easily inverted and $|\det(p_i)|\!=\!1$, the log-likelihood \eqref{eq:2} remains tractable.

\subsection{Out of Distribution detection}
Consider a deep neural network classifier trained in the standard supervised learning setting (via labeled data). The OOD detection problem seeks to assign a confidence score to the classifier predictions, such that classification of OOD data would be given a lower score than in-distribution data. 
Liang et al.~\cite{liang2017principled} applied temperature-scaling to the network's soft-max output as the confidence score. Let $\sigma_i(x)$ denote the network's logit output for class $i$ and input $x$. Then the temperature-scaled (TS) score is:
$$S_{TS}(x; T) = \max_i \left( \frac{\exp (\sigma_i(x)/T)}{\sum^N_{j=1} \exp (\sigma_j (x)/T)} \right),$$
where $T$ is the temperature. In addition, Liang et al.~\cite{liang2017principled} proposed to pre-process the input $x$ by modifying it in a direction that increases the soft-max score:
$$
\Tilde{x}_{TS}(x) = x - \epsilon \cdot \sign\left(- \nabla_x \log S_{TS}(x; T)\right),
$$
where the intuition is that in-distribution samples would be more susceptible to an informative pre-processing, leading to better discrimination between in- and out-of-distribution samples. The final method, termed ODIN is given by: 
$$
S_{ODIN}(x; T) = S_{TS}(\Tilde{x}_{TS}(x); T).
$$

Lee et al.~\cite{lee2018simple} improve on the ODIN method by considering different layers of the network, and measuring the Mahalanobis distance from the average network activations. For some network layer $l$ and class label $c$, let $\phi_l(x)$ denote the feature activations at layer $l$ for input $x$.\footnote{For a convolutional neural network, \cite{lee2018simple} propose to take the average activation across the spatial dimensions for each channel. In this work we adopt this approach, but our method can be applied without change to the actual feature activations.} Let $\hat{\mu}_{l,c}$ denote the empirical mean of feature activations for training data from class $c$, and let $\hat{\Sigma}_l$ denote the empirical covariance matrix of feature activations, calculated across all classes.
Given a test example $x$, Lee et al.~\cite{lee2018simple} calculate the score as the weighted Mahalanobis distance:
$$
S_M(x)\!=\!\!\sum_l w_l \!\cdot\! \max_c \{\!-\! \left(\phi_l(x) \!-\! \hat{\mu}_{l,c}\right)^T \!\hat{\Sigma}_l^{-1} \!\left(\phi_l (x) \!-\! \hat{\mu}_{l,c} \right)\},
$$
where $w_l$ are weights. Using the Mahalanobis distance as a score is equivalent to modeling the feature space of every layer as a $C$ class-conditional Gaussian distribution with a tied covariance $\hat{\Sigma}$, i.e., $P(\phi_l(x)|y\!=\!\!c)\!=\!\!\mathcal{N}(\phi_l(x)|\hat{\mu}_{l,c}, \hat{\Sigma})$, and measuring the score as the likelihood of the features (under the most likely class, and averaging over all layers). 

Lee et al.~\cite{lee2018simple} motivate the Mahalanobis score from a connection between the softmax output of the final layer and a generative classifier
with a class-conditional Gaussian distribution model with tied covariance. This generative model is a special case of Gaussian discriminant analysis (GDA), also known as linear discriminant analysis (LDA). 

Lee et al.~\cite{lee2018simple} also propose a pre-processing method similar to ODIN, where\\
$\Tilde{x}_{M}(x) \!=\! x - \epsilon \cdot \sign\left(\nabla_x \left(\phi_l(x) \!-\! \hat{\mu}_{l,\hat{c}}\right)^T \!\hat{\Sigma}_l^{-1} \!\left(\phi_l (x) \!-\! \hat{\mu}_{l,\hat{c}} \right)\right)$.

\section{Residual Flow for OOD Detection}\label{sec:residual_flow}
Our aim is to detect out of distribution (OOD) examples, equipped with an already trained neural network classifier at our disposal. This is achieved by learning the distribution of the feature space of various layers of the network, given valid, in-distribution inputs that were observed during the training phase. Motivated by the empirical success of the Gaussian distribution model of Lee at al.~\cite{lee2018simple}, 
in this section we propose a normalizing flow architecture that allows for a principled extension of the Gaussian model to non-Gaussian distributions. We hypothesize that the activations of general neural network layers do not necessarily follow a Gaussian distribution, and thus a more expressive model should allow for better OOD detection performance. Our model is composed of a linear component, which we show is equivalent to a Gaussian model, and a non-linear residual component, which allows to fit more expressive distributions using deep neural network flow architecture. 

\subsection{Linear Flow Model}\label{subsec:linear_flow_model}
We start by establishing a simple relation between the maximum-likehood estimate of a Gaussian model (as in GDA) and linear flow. The next proposition shows that for a linear flow model, the maximum likelihood parameters are equivalent to the empirical mean and covariance of the data.
\begin{proposition}\label{prop:lin}
Let $X = \{x_1,x_2, ..., x_N\}$ be a dataset of vectors in $\mathbb{R}^d$, i.e $ \forall i:~x_i\in\mathbb{R}^d$. Consider a linear normalizing flow, i.e $X = AZ+b$, where $Z\sim \mathcal{N}(0,I)$, $A\in \mathbb{R}^{d\times d}$ and $b\in \mathbb{R}^{d}$. Let $p_{A,b}(x_i)$ denote the probability of $x_i$ under this flow model. The parameters $A,b$ that maximize the likelihood of the dataset $X$ under this model satisfy:
$
b=\frac{1}{N}\sum_{i=1}^{N} x_i = \hat{\mu},
$ the empirical mean and 
$
AA^T =  \frac{1}{N}\sum_{i=1}^{N} (x_i-\hat{\mu})(x_i-\hat{\mu})^T= \hat{\Sigma},
$ the empirical covariance of the data $X$.
\end{proposition}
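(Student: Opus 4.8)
The plan is to reduce the problem to the classical maximum-likelihood estimation of a multivariate Gaussian by substituting the linear map into the change-of-variable formula \eqref{eq:2}. For the linear flow $x = Az + b$ the inverse map is $f(x) = A^{-1}(x-b)$ (assuming $A$ invertible, as required for bijectivity), so its Jacobian is the constant matrix $A^{-1}$ and $\log\left|\det\left(\partial f/\partial x^T\right)\right| = -\log|\det A| = -\tfrac12\log\det(AA^T)$. Plugging $p_Z = \mathcal{N}(0,I)$ into \eqref{eq:2} and summing over the dataset, I would obtain that the total log-likelihood $\mathcal{L}(A,b) = \sum_i \log p_{A,b}(x_i)$ equals, up to an additive constant, $-\tfrac12\sum_i (x_i-b)^T(AA^T)^{-1}(x_i-b) - \tfrac{N}{2}\log\det(AA^T)$. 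This is precisely the log-likelihood of a Gaussian with mean $b$ and covariance $\Sigma := AA^T$.

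The first key observation is that $\mathcal{L}$ depends on $A$ only through $\Sigma = AA^T$, so I would reparametrize and maximize over the mean $b$ and a symmetric positive-definite matrix $\Sigma$ rather than over $A$ directly. Optimizing $b$ is immediate: setting $\nabla_b \mathcal{L} = \sum_i \Sigma^{-1}(x_i - b) = 0$ forces $b = \hat\mu$ regardless of $\Sigma$, which gives the claimed empirical mean. Substituting $b = \hat\mu$ and using the trace identity $\sum_i(x_i-\hat\mu)^T\Sigma^{-1}(x_i-\hat\mu) = N\,\mathrm{tr}(\Sigma^{-1}\hat\Sigma)$ reduces the remaining problem to minimizing $\mathrm{tr}(\Sigma^{-1}\hat\Sigma) + \log\det\Sigma$ over positive-definite $\Sigma$.

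For this last step, which I expect to be the main obstacle since it is a constrained matrix optimization, I would avoid raw matrix calculus and instead diagonalize. Writing $\lambda_1,\dots,\lambda_d>0$ for the eigenvalues of $\hat\Sigma^{1/2}\Sigma^{-1}\hat\Sigma^{1/2}$, the objective becomes $\sum_i(\lambda_i - \log\lambda_i) + \log\det\hat\Sigma$, and since the scalar function $\lambda \mapsto \lambda - \log\lambda$ is strictly convex with its unique minimum at $\lambda = 1$, every eigenvalue must equal $1$ at the optimum. This forces $\hat\Sigma^{1/2}\Sigma^{-1}\hat\Sigma^{1/2} = I$, i.e. $\Sigma = AA^T = \hat\Sigma$, the empirical covariance, completing the argument. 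I would finally remark that $A$ itself is determined only up to right multiplication by an orthogonal matrix, consistent with the proposition stating its conclusion in terms of $AA^T$ rather than $A$, since $Z$ and $QZ$ share the same standard-normal law for any orthogonal $Q$.
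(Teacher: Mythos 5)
Your proposal is correct and follows essentially the same route as the paper: both reduce the linear flow to the observation that $p_{A,b}(x)=\mathcal{N}(b,AA^T)$, so that maximizing the flow likelihood is exactly classical Gaussian maximum-likelihood estimation in the parameters $(b,\Sigma)$ with $\Sigma=AA^T$. The only difference is that the paper then simply cites the known result that the Gaussian ML estimators are the empirical mean and covariance, whereas you prove that fact from scratch (via the trace identity and the eigenvalue argument for $\hat\Sigma^{1/2}\Sigma^{-1}\hat\Sigma^{1/2}$, implicitly assuming $\hat\Sigma$ is positive definite, as the paper also does here), making your version self-contained but not conceptually different.
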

\begin{proof}
Since $X$ is a linear transformation of $Z \sim \mathcal{N}(0,I)$, the probability of $X$ under this model is given by:
\begin{align} \label{eq:lin_probability}
p_{A,b}(x_i)\sim \mathcal{N}(b,AA^T). 
\end{align}
On the other hand, the maximum likelihood (ML) estimators $\Tilde{\mu},\Tilde{\Sigma}$ for $X$ under Gaussian distribution assumption are known to be the empirical mean and covariance~\cite{eliason1993maximum}:
\begin{align} \label{eq:ML_estimator}
\Tilde{\mu}\!=\!\frac{1}{N}\sum_{i=1}^{N}\!x_i\!=\!\hat{\mu},~
\Tilde{\Sigma}\!=\!\frac{1}{N}\!\sum_{i=1}^{N} (x_i-\hat{\mu})(x_i-\hat{\mu})^T\!=\!\hat{\Sigma}.~~~~
\end{align}
By combining \eqref{eq:lin_probability} and \eqref{eq:ML_estimator} we get the desired results.
\end{proof}

The linear flow transformation $A$ can be obtained analytically by exploiting the spectral decomposition for the symmetric positive semi-definite (PSD) matrix $\hat{\Sigma}\!=\!QDQ^T$, where $Q$ is an orthogonal matrix whose columns are the eigen-vectors of $\hat{\Sigma}$ and $D$ is a diagonal matrix whose entries are its eigen-values. The resulting invertible linear flow transformation for data $X$ can be written as:
$$
X = AZ +b,~~~ Z = A^{-1} ( X - b),
$$
where $b = \hat{\mu}~,~A = Q D^{\frac{1}{2}},~A^{-1} = D^{-\frac{1}{2}} Q^T$.\footnote{To simplify notation, in the rest of this paper we assume that the empirical mean $\hat{\mu}$ is zero, achieved in practice by zero-centering the data.} In the sequel, we propose an extension of the linear flow that adds non-linear components, which we term a residual flow model.

\subsection{Residual Flow Model}\label{subsec:residual_flow_model}
In this section, we describe how to extend the linear flow model to include non-linear components. Rather than directly using a fully non-linear model like RealNVP or GLOW, as described in Section \ref{sec:background}, we would like a model that can be viewed and trained as an extension to the linear model. This approach will allow a principled improvement over the Gaussian model of Lee et al.~\cite{lee2018simple}, which we already know to perform well.

We begin by composing a linear flow with a residual flow model:
$$
    f^{res} = p_{k} \cdot f^{non-lin}_k \cdot p_{k-1} \dots p_2 \cdot f^{non-lin}_2 p_1 \cdot f^{non-lin}_1 \cdot A^{-1},
$$
with the following log determinant:
\begin{align*}
     \log\left( \left|\det\left(\frac{\partial f(x)}{\partial x^T}\right)\right|\right) = \log\left(\left|\det\left(A^{-1}\right)\right|\right) \\
     + \sum_{i}  \log\left( \left|\det\left(\frac{\partial f^{non-lin}_i(x)}{\partial x^T}\right)\right|\right).
\end{align*}
Note that, from Eq. \eqref{eq:3}, when $s_i$ and $t_i$ are set to zero, the non-linear terms $f^{non-lin}_i$ are reduced to the identity map. In this case, the permutation terms have no effect, as the components of $z$ have identical and independent distributions. Thus, in this case, the residual flow $f^{res}$ is equivalent to the linear flow $f^{lin} = A^{-1}$. Therefore, we can initialize the residual flow by fixing the networks $s_i$ and $t_i$ to be zero, and calculating $A$ as described in Section \ref{subsec:linear_flow_model}, which is equivalent to fitting a Gaussian distribution model to our data. Subsequently, we can fine-tune the non-linear components in the model to obtain a better fit to the data. In practice, setting only the last layer of the networks $s_i$ and $t_i$ to zero is sufficient for the initialization step.\footnote{ We found this to perform better in fine-tuning the non-linear terms, as most of the network is not initialized to zero and obtains large gradients in the initial training steps.}

Similar to the GLOW model \cite{kingma2018glow}, we found that the permutation terms $p_i$ have an important contribution, by diversifying the inputs of the non-linear components. In our implementation, we alternate between fixed, initially random,\footnote{The random permutation shuffles the preceding layer’s input in a predetermined random order that remains consistent throughout training.} permutation matrices and switch permutation matrices to mediate the non-linear flow blocks. Concretely, $p_i$ stands for a random permutation for odd $i$ and switch permutation for even $i$. Figure \ref{figure:Residual_flow_all} illustrates the proposed architecture, and the full implementation is described in Section \ref{sec:residual_flow_to_novalty_detec}.
\begin{figure} [t] \centering
\subfigure[Residual Flow blocks during initialization and training.]
{
\includegraphics[clip, trim=1.5cm 5.5cm 2cm 5cm, width=0.5\textwidth]{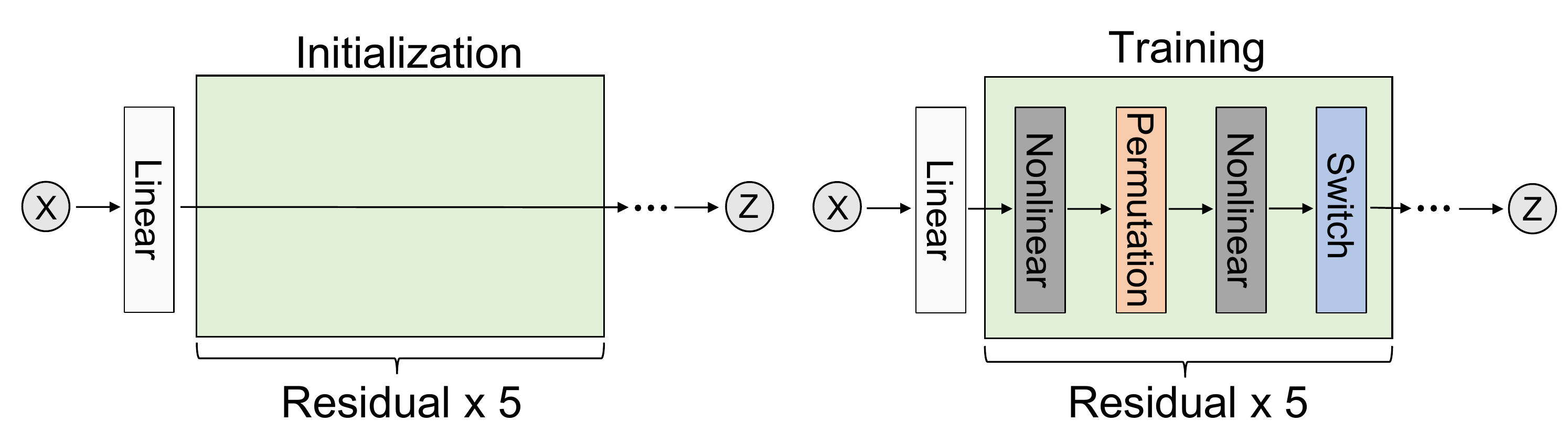}\label{figure:Residual_flow_diag}}
\,
\subfigure[The complete Residual Flow architecture $Z = f(X)$.]
{
\includegraphics[clip, trim=9cm 8cm 8.5cm 7cm, width=0.4\textwidth]{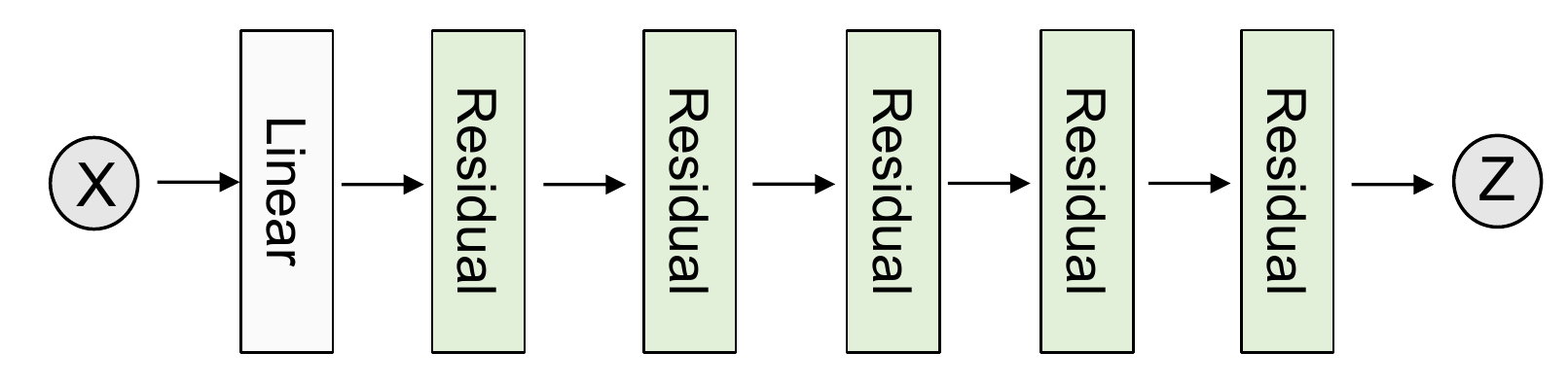} \label{figure:Residual_flow}}
\caption{Residual Flow architecture.}
\label{figure:Residual_flow_all}
\vspace{-0.4cm}
\end{figure}

\subsubsection{Degenerate case}\label{subsec:degeneracy}
If the covariance matrix $\hat{\Sigma}$ is not full rank, then the multivariate normal distribution is degenerate: its vector elements are linearly dependent, and the covariance matrix does not correspond to a density over the $d$-dimensional space. In this case, Lee et al.~\cite{lee2018simple} propose to use $\hat{\Sigma}^{\dagger}$, the pseudo-inverse of $\hat{\Sigma}$, to calculate the Mahalanobis distance:
\begin{align}\label{eq:mahalanobis2}
- \left(X - \hat{\mu}\right)^T \!\hat{\Sigma}^{\dagger} \!\left(X -\hat{\mu} \right),
\end{align}
which is equivalent to restricting attention to a subset of $k=rank(\hat{\Sigma})$ of the coordinates of $X$, such that the covariance matrix of this subset is positive definite (PS); the remaining coordinates are regarded as an affine function of the selected coordinates. 
In our model we handle degenerate distributions with a similar approach: We set $Z = A^{\dagger}X$ to be a $k$-dimensional vector with a $k$-dimensional Gaussian distribution, using a dimensionality reduction transformation $A^{\dagger} \in \mathbb{R}^{k \times d}$.\footnote{Note that here $A^{\dagger}$ is not the inverse of $A$.} We construct $A^{\dagger} = D^{-\frac{1}{2}}  Q^T$ with $D^{-\frac{1}{2}}\!\in\!\mathbb{R}^{k \times k}$ and $Q^T\!\in\!\mathbb{R}^{k \times d}$, by considering the inverse root of the $k$ non-zero eigen-values of $\hat{\Sigma}$ in $D^{-\frac{1}{2}}$ diagonal and their corresponding eigen-vectors in $Q^T$ rows. Note that using $A^{\dagger}$ for degenerated vectors $X$ yields the same Gaussian distribution as the pseudo-inverse used in \cite{lee2018simple}. In the rest of this paper we consider $A^{\dagger}$ as the linear flow transformation for degenerated vectors $X$.
After this linear dimensionality reduction, we apply the residual flow model on the resulting $k$-dimensional vector $Z$ as presented in Section \ref{subsec:residual_flow_model}.
As a remark, the aforementioned treatment removes only linear dependencies among feature elements, and does not address non-linear dependencies. Practically, however, we found that this approach is sufficient for all the experiments we conducted.

Our residual flow model is a general normalizing flow architecture, and we expect it to work well when the data approximately fits a Gaussian distribution. 

\subsection{Residual Flow Applied to OOD Detection}\label{sec:residual_flow_to_novalty_detec}

We now describe an application of the residual flow that extends the Gaussian model of \cite{lee2018simple} for OOD detection.
\noindent
First, for each network layer $l$, we extract the mean activation in the training data for each class label $\mu_{l,c}$. Then, for each sample $x$ in our training data, we extract the network activation in layer $l$, $\phi_l(x)$, and subtract from it the mean $\mu_{l,c}$ for the corresponding class, to obtain a centered feature training set $\hat{\phi}_l(x)$. 
Next, we fit a Gaussian distribution to the centered data by constructing a linear flow model for each layer as described in Section \ref{subsec:linear_flow_model}. We construct a single linear model \emph{for all} classes, similar to the single covariance matrix in \cite{lee2018simple}.
Finally, for each layer $l$, and for each class $c$, we train a residual flow model by training the non-linear flow blocks $f^{non-lin}_i$, as described in \ref{subsec:residual_flow_model}, and freeze the network weights in the linear block $f^{lin}_i$. As a stopping criteria for training the residual flow blocks, we use a separate validation set, and validate on the log-likelihood of the data. We found this approach to be effective for preventing overfitting in our experiments. This model, applied for OOD detection, already has good performance at the outset, leading to a better fit to the data distribution as training progresses.

\textbf{Implementation details:} 
We implement the model as a single linear flow block $f^{lin}\!=\!A^{-1}$, followed by $10$ non-linear flow blocks $f^{non-lin}$, producing a map $f^{res}$ totalling $11$ flow blocks. As for the layers $p_i$, which interconnect the blocks $f^{non-lin}$, we alternate between switch and random permutation matrices. We use three fully connected layers per non-linear block (in each $s_i$ and $t_i$) with leaky ReLU activation functions in the intermediate layers. We use a batch size of 256 and Adam \cite{Kingma2014AdamAM} optimizer for learning the non-linear blocks with a learning rate of $10^{-5} - 10^{-6}$, chosen via a separate validation set of $10\mathrm{K}$ examples.

\subsection{Input pre-processing}\label{subsec:Input_preprocessing}
Motivated by the success of input pre-processing in ODIN \cite{liang2017principled} and Mahalanobis \cite{lee2018simple}, we propose an extension of this idea to our approach. Since the Mahalanobis pre-processing can be seen as maximizing the likelihood of the input under the Gaussian model, we similarly introduce the following input pre-processing stage for our flow model:
\begin{align}\label{eq:7}
\Tilde{x} = x + \epsilon \cdot \sign\left( \nabla_x \log p(\phi_l(x); \hat{c})\right),
\end{align}
where $\hat{c} = \underset{c\,\in \,C}{\argmax}\,p(\phi_l(x); c)$ and $p(\phi_l(x); \hat{c})$ is the probability distribution of the feature space of the $l$-th layer of class $\hat{c}$, learned by our flow model. Note that this score aims to increase the probability of the in-distribution data.

\subsection{OOD Detection Algorithm}\label{subsec:Algorithm}
In this section we describe the proposed procedure for OOD detection. Using the training set, we first train a collection of residual flows for each layer and each class $\{ f^{res}_{l,c}: \forall l,c \}$ according to Section \ref{subsec:residual_flow_model}. Given a test example $x$, we extract the layers' activations for this example $\{\phi_l(x): \forall l\}$, and calculate the most probable class for each layer $\hat{c}_{l}$. Using $\hat{c}_{l}$ we calculate the pre-processed input $\Tilde{x}$, according to Eq.~\eqref{eq:7}, and re-calculate the layers' activations $\{\phi_l(\Tilde{x}): \forall l\}$. The probability of the most probable class serves as a score of the layer $S_l = \max \limits_c p_c\left( \phi_l(\Tilde{x}) - \mathbf{\widehat \mu}_{l,c}\right)$. Finally, the effective score is a weighted average of layers' scores $\sum_{l} \alpha_l S_l$. The weights are obtained using a similar strategy as in \cite{lee2018simple}, where the weights of the layers $\alpha_l$ are computed by training a logistic regression detector on a validation set. The full algorithm is detailed in Algorithm \ref{alg:computing_score}.

\subsection{Computational Overhead}\label{subsec:Complexity}
It is important to evaluate the computational overhead of using a more expressive model for network activations. We compare our method to \cite{lee2018simple}, and consider two cases: (i) During training: our initialization step is equivalent to the method of \cite{lee2018simple}. Thus, performance improvement comes at a cost of additional training time. Figure \ref{figure:AUROC_vs_iterations} shows the tradeoff between additional training iterations and performance gain. Note that the improvement monotonically increases with training iterations. (ii) During testing: In the test phase, both methods first calculate a forward pass of the test image through the classification network for feature extraction. Then, \cite{lee2018simple} calculates the Mahalanobis distance, while our method runs another forward pass of the residual flow networks. In our experiments, the forward pass of the classification network was the dominant complexity factor. This may change with a larger flow model, but in our experiments we did not require such. Thus, our performance advantage does not incur significant overhead.

\begin{algorithm}[t]
\caption{Computing the Residual-Flow score $S_l$.} \label{alg:computing_score}
\begin{algorithmic}
\State {\bf Input:} \small Test sample $x$, weights of logistic regression detector $\alpha_l$, noise $\varepsilon$ and $C$ residual-flow for each layer: $\{ f^{res}_{l,c}: \forall l,c \}$ 
\vspace{0.05in}
\hrule
\vspace{0.05in}
\State Initialize score vectors: $\mathbf{S}_{RF}(x) = [S_{l,c}:\forall l,c]$
\For{each layer $l \in 1,\ldots,L$}
\State Find the most probable class: 
\State  ~~~~~~~~~ $\widehat c = \arg \max_c ~p_c(\phi_{l}(x) - \mathbf{\widehat \mu}_{l,c})$
\State Add small noise to test sample: 
\State  ~~~~~~~~~ $\Tilde{x} = x + \varepsilon \text{sign} \bigtriangledown_x p_{\hat{c}} \left( \phi_l(x) - {\widehat \mu}_{l, \widehat c}\right)$ 
\State  Computing confidence score: 
\State ~~~~~~~~~ $S_l = \max \limits_c p_c\left( \phi_l(\Tilde{x}) - \mathbf{\widehat \mu}_{l,c}\right)$
\EndFor
\State \Return Confidence score for test sample {$\sum_{l} \alpha_l S_l$}
\end{algorithmic}
\end{algorithm}
\normalsize
\section{Related Work}
\label{sec:related_work}
OOD detection has mostly been studied in the unlabelled setting, where the data contains only samples (e.g., images) but not class labels. Classical methods include one-class SVM~\cite{Scholkopf:2001:ESH:1119748.1119749} and support vector data description~\cite{Tax:2004:SVD:960091.960109}, and more recently, deep learning methods have become popular~\cite{DBLP:journals/corr/abs-1901-03407}. Methods such as \cite{Erfani:2016:HLA:2952005.2952200,ae16,cao16hybrid,chen2017outlier} extract features using unsupervised learning techniques, and feed them to classical OOD detection methods. Deep SVDD~\cite{pmlr-v80-ruff18a} learns a neural-network encoding that minimizes the volume of data around a predetermined point in feature space. Recently, Golan and El-Yaniv~\cite{golan2018deep} proposed to learn features by applying a fixed set of geometric transformations to images, and training a deep network to classify which transformation was applied. Density estimation methods for detecting OOD examples have originally been studied in low dimensional space \cite{pimentel2014review, chow1970optimum, ghoting2008fast}. Recently, deep generative models such as generative adversarial networks, variational autoencoders, and deep energy-based models have been proposed for OOD detection in high-dimensional spaces \cite{an2015variational,suh2016echo,schlegl2017unsupervised,wang2017safer,DBLP:journals/corr/ZhaiCLZ16,Song2018LearningNR}.

Our work focuses on the labelled setting, where a network trained for image classification is provided, along with the training data and labels. Hendrycks and Gimpel~\cite{hendrycks2016baseline} proposes the soft-max output as a confidence score for OOD examples, and \cite{geifman2017selective} compared this approach with the Monte-Carlo dropout ensemble method. Liang et al.~\cite{liang2017principled} proposed ODIN, which combines temperature scaling and input pre-processing. The geometric transformations method of Golan and El-Yaniv~\cite{golan2018deep} can also be applied to the labelled setting. The state-of-the-art is the method of Lee et al.~\cite{lee2018simple} that uses the Mahalanobis distance in feature space. In our work we show that providing a better density model, leads to a marked improvement over Lee et al.'s results.

Concurrent with our work, several OpenReview postings suggested improvements to the method of \cite{lee2018simple}. Sastry et al.~\cite{sastry2020zeroshot} propose a scoring function for OOD detection based on the correlation between different features of the same layer, using higher-order Gram matrices, which can be seen as a different form of incorporating higher-order statistics beyond the Gaussian model. Yu et al.~\cite{yu2020outofdistribution} investigate the benefit of combining the global average of the feature maps with their spatial pattern information, while using the Gaussian model assumption. In principle, their approach can be combined with our improved flow-based density model. 

\section{Experiments}
\label{sec:experiments}

In our experiments, we aim to answer the following questions: (1) How does the residual flow model compare with conventional flow and Gaussian models? (2) How does our OOD detection method compare with state-of-the-art?

Our OOD detection evaluation follows the data sets and experiments in \cite{lee2018simple}, and consists of 3 training data sets: CIFAR10, CIFAR100, and SVHN, and 4 out-of-distribution (OOD) data sets: CIFAR10, Tiny ImageNet, SVHN, and LSUN. In the supplementary material we provide additional experiments, which draw a comparison between residual flow,  LDA (Mahalanobis) and the GDA model. The full residual flow implementation is available online.\footnote{\url{https://github.com/EvZissel/Residual-Flow}} 

\subsection{Residual Flow vs. Regular Flow}\label{subsec:residual_vs_regular}
In this section we compare the performance of learning a residual flow model over learning regular non-linear flow model.
First, we inspect the performance of the proposed approach on the task of distinguishing in- and out-of-distribution examples based on the first layer of ResNet, trained on CIFAR-100, where Tiny-ImageNet is used as OOD. In our comparison, we evaluate residual flow against regular non-linear flow and linear-flow/Mahalanobis density modeling. Figure \ref{figure:Roc_curve} presents a receiver operating characteristic (ROC) curve \cite{davis2006relationship} comparison of the three methods,\footnote{Training the flow models throughout this paper (residual and regular) is conducted using a validation set of 10K samples that are portioned from the training set, and the stopping criterion is the overfit set-point at which the validation likelihood ceases to increase.} demonstrating the superiority of the residual flow model in modeling feature layer distribution of a neural network. Next, in Figure \ref{figure:AUROC_vs_iterations}, we evaluate the area under the ROC (AUROC) curve as a function of training iterations. Note that the linear flow\footnote{The linear model is described in Supplementary material -- Section 2.}, as expected, converges to the same AUROC as the baseline Gaussian density model. The residual flow, however, starts at baseline performance (equivalent to the Gaussian model), and steadily improves upon it, as the non-linear components allow for better modelling of the data. The conventional non-linear flow, on the other hand, starts from a low AUROC score, rises erratically, and is not guaranteed to improve upon the baseline. The erratic behavior also makes it difficult to decide when to stop training. Indeed, we found this model to be much less stable in our evaluation.

\begin{figure*} [ht] \centering
\subfigure[ROC curve ResNet -- first layer]
{
\includegraphics[width=0.30\textwidth]{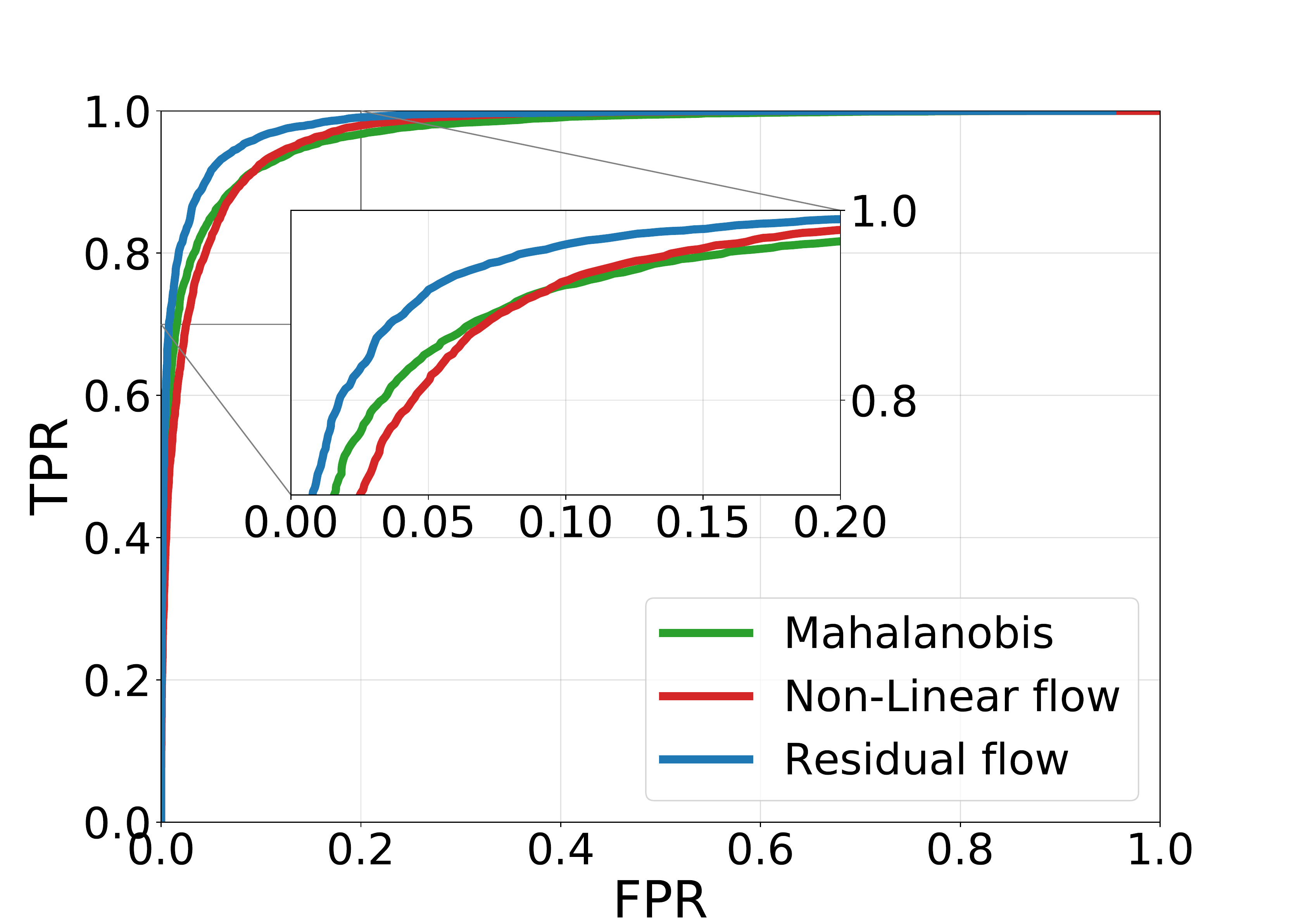}\label{figure:Roc_curve}}
\,
\subfigure[AUROC vs. Iterations]
{
\includegraphics[width=0.30\textwidth]{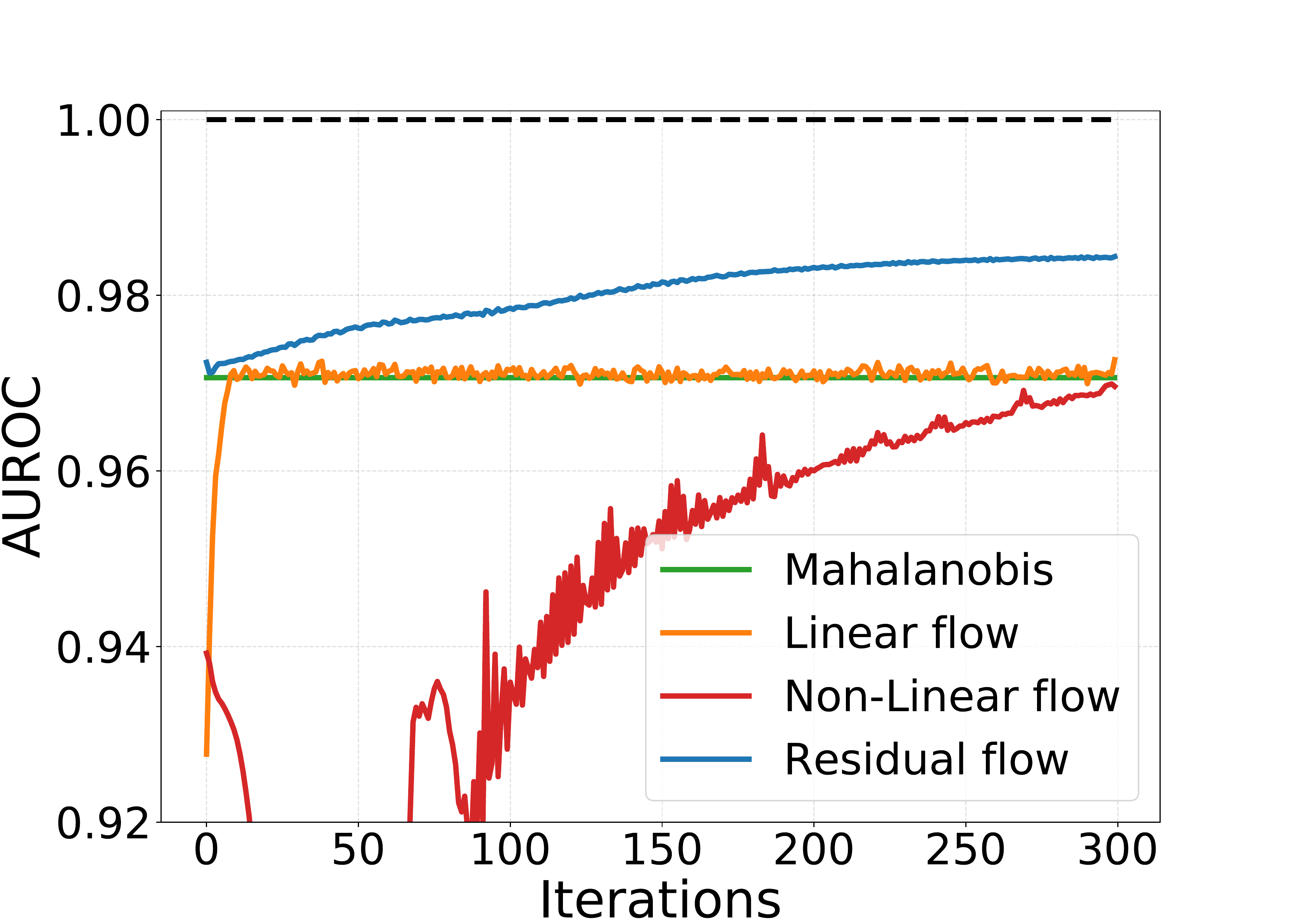} \label{figure:AUROC_vs_iterations}}
\subfigure[ROC curve DenseNet -- all layers]
{
\includegraphics[width=0.30\textwidth]{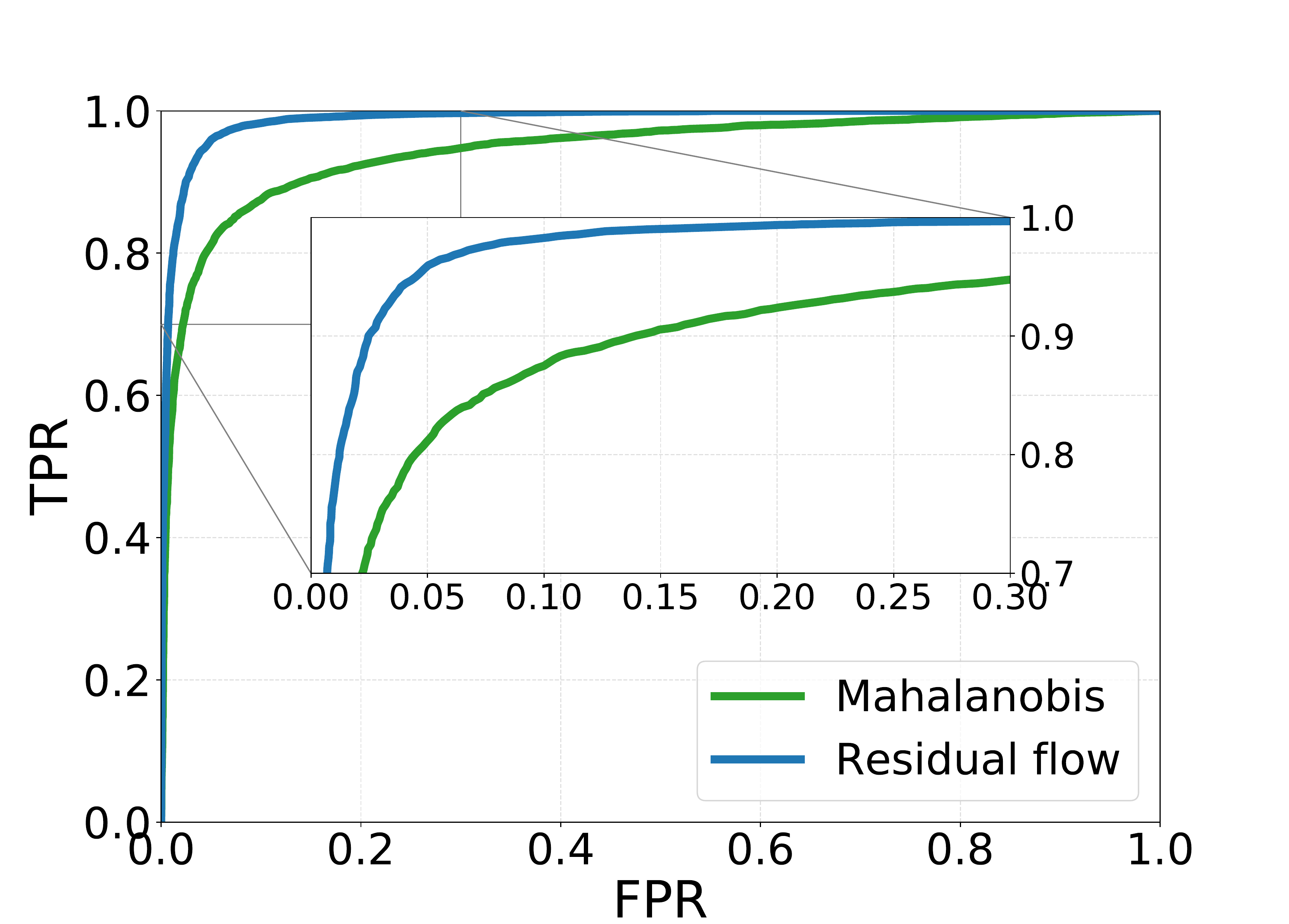}\label{figure:Roc_all}
}
\caption{(a) and (b) OOD detection using features taken from the first layer of ResNet trained on CIFAR-100, with TinyImageNet as OOD.\newline (a) ROC curve comparison of Residual flow (AUROC = 98.4), Non-linear flow (AUROC = 97.0) and Mahalanobis (AUROC = 97.0) \cite{lee2018simple}. (b) AUROC comparison as a function of training iterations for different models. Note that by our initialization method, the residual flow starts at baseline performance of Mahalanobis. (c) ROC comparison of Mahalanobis (AUROC = 94.6) and Residual flow (AUROC = 98.9), using a weighted average of the features taken from layers of DenseNet trained on CIFAR-100, with LSUN as OOD.}
\vspace{-0.35cm}
\end{figure*}

\subsection{OOD Detection Evaluation}\label{subsec:evaluation}
We conduct a series of experiments to evaluate the performance in detecting out-of-distribution examples. These tests are used by contemporary state-of-the-art methods \cite{hendrycks2016baseline,liang2017principled,lee2018simple} to benchmark the efficacy of an algorithm in distinguishing abnormalities.
We follow the practices presented in \cite{lee2018simple}, in which already-trained neural networks are used in conjunction with conventional datasets. 
The experiments use DenseNet with 100 layers  \cite{huang2017densely} and ResNet with 34 layers \cite{he2016deep} as target networks, trained on one of the following datasets: CIFAR-10, CIFAR-100 \cite{krizhevsky2009learning} and SVHN \cite{netzer2011reading}. 
Feature extraction is performed as proposed by Lee et al. \cite{lee2018simple}: At the outset, we extract the output of specific layers from the target network and average over the spatial domain to produce a set of 1-dimensional feature vectors, whose size matches the number of feature maps in the corresponding layer. The selected layers are the terminal layers of every dense-block (or residual-block) of DenseNet (or ResNet). Next, we train a set of residual flow networks, each observing a different output layer of the target network (e.g. DenseNet) activated by an entire class of examples from its original dataset. A portion of the training set, 10K in total, is set aside as a validation set, to prevent overfit during training. The process repeats for all classes and for all end-block layers of the target network, yielding a set of trained residual flows. At the test phase, a score is calculated for every layer of the target network and the final confidence score is obtained using weights produced by training a logistic regression detector (see Algorithm \ref{alg:computing_score}). 

The weights of the logistic regression decoder and the input pre-processing parameter, $\epsilon$, are the hyperparameters of our model, tuned using a separate validation set of in- (positive class) and out-of-distribution (negative class) pairs, consisting of 1,000 images of each class. 
Similarly to Lee et al. \cite{lee2018simple}, we also investigate performance when a validation set of OOD samples is not available, 
and in this case we tune the hyperparameters using validation sets of both in-distribution samples and corresponding adversarial samples generated by FGSM \cite{Goodfellow2014ExplainingAH} as out-of-distribution samples.

The networks are tested using their original test set, with the introduction of OOD samples from either LSUN \cite{yu2015lsun}, CIFAR-10 \cite{krizhevsky2009learning}, Tiny-ImageNet \cite{deng2009imagenet} or SVHN \cite{netzer2011reading}. The following performance measures are evaluated: true negative rate (TNR) at $95\%$ true positive rate (TPR), area under the receiver operating characteristic curve (AUROC), area under the precision-recall curve (AUPR), and detection accuracy. We compare our method to the state-of-the-art, which employs Mahalanobis score as a confidence score \cite{lee2018simple}. Note that to accommodate a fair comparison, we adopt the hyperparameter selection procedure presented in \cite{lee2018simple}.

Table \ref{tbl:out_main} aggregates the performance of our method compared to Mahalanobis for the task of OOD detection across all in- and out-of-distribution dataset pairs, when an OOD validation set is available. Table \ref{tbl:out_val_out_candidate3} compares the performance when the validation set is produced using FGSM, as described above. We present the detection performance measures of our method with and without input pre-processing (right and middle columns respectively), and compare it to Mahalanobis score method with input pre-processing (left column). 
Tables \ref{tbl:out_main} and \ref{tbl:out_val_out_candidate3} demonstrably show that our method surpasses the current state-of-the-art, significantly outperforming the Mahalanobis approach in some cases -- even without input pre-processing. For example, applying our method on ResNet trained on CIFAR-100 samples, when LSUN is used as OOD dataset, improves the AUROC from $66.2\%$ to $82.0\%$ (without input pre-processing) and $87.2\%$ (with input pre-processing). 
In summary, the results in tables \ref{tbl:out_main} and \ref{tbl:out_val_out_candidate3} demonstrate that
better modeling of feature activations leads to better OOD detection. Figure \ref{figure:Roc_all} further demonstrates the contribution of our method compared to Mahalanobis \cite{lee2018simple}. We produce a ROC curve using ResNet trained on CIFAR-100, with LSUN dataset used as OOD. Note that the performance in Figure \ref{figure:Roc_all} was obtained without any pre-processing of the data. As seen from Figure \ref{figure:Roc_all}, our method significantly outperforms the Mahalanobis score method. 

\begin{small}
\begin{table*}[ht] 
\centering
\resizebox{\textwidth}{!}{
\begin{tabular}{@{}ccclclclll@{}}
\toprule
\multirow{2}{*}{\begin{tabular}[c]{@{}c@{}} In-dist \\ (model) \end{tabular}}
 & \multirow{2}{*}{\begin{tabular}[c]{@{}c@{}} Out-of-dist\end{tabular}}
& \multicolumn{1}{c}{\begin{tabular}[c]{@{}c@{}} TNR at TPR 95\% \end{tabular}}
& \multicolumn{1}{c}{\begin{tabular}[c]{@{}c@{}} AUROC \end{tabular}}
& \multicolumn{1}{c}{\begin{tabular}[c]{@{}c@{}} Detection accuracy \end{tabular}} 
& \multicolumn{1}{c}{\begin{tabular}[c]{@{}c@{}} AUPR in \end{tabular}} 
& \multicolumn{1}{c}{\begin{tabular}[c]{@{}c@{}} AUPR out \end{tabular}} \\\cline{3-7}  
\multicolumn{1}{c}{} & \multicolumn{1}{c}{} & \multicolumn{5}{c}{  Mahalanobis \cite{lee2018simple}/ Res-Flow without pre-processing / Res-Flow with pre-processing} \\ \midrule
\multirow{3}{*}{\begin{tabular}[c]{@{}c@{}} CIFAR-10 \\(DenseNet) \end{tabular}} 
& \multicolumn{1}{c}{SVHN}  
& \multicolumn{1}{c}{85.8 / {\bf 94.9} / {\bf 94.9}}
& \multicolumn{1}{c}{96.6 / {\bf 98.9} / {\bf 98.9}}
& \multicolumn{1}{c}{91.9 / {\bf 95.3} / {\bf 95.3}}
& \multicolumn{1}{c}{98.7 / {\bf 99.5} / {\bf 99.5}}
& \multicolumn{1}{c}{88.8 / {\bf 97.5} / {\bf 97.5}}\\
& \multicolumn{1}{c}{ImageNet} 
& \multicolumn{1}{c}{95.3 / {\bf 96.4} / {\bf 96.4}}
& \multicolumn{1}{c}{98.9 / {\bf 99.2} / {\bf 99.2}}
& \multicolumn{1}{c}{95.2 / {\bf 96.0} / {\bf 96.0}}
& \multicolumn{1}{c}{98.9 / {\bf 99.2} / {\bf 99.2}}
& \multicolumn{1}{c}{98.7 / {\bf 99.2} / {\bf 99.2}}\\
& \multicolumn{1}{c}{LSUN}
& \multicolumn{1}{c}{97.9 / {\bf 98.2} / {\bf 98.2}}
& \multicolumn{1}{c}{99.3 / {\bf 99.5} / {\bf 99.5}}
& \multicolumn{1}{c}{96.8 / {\bf 97.1} / {\bf 97.1}}
& \multicolumn{1}{c}{99.3 / {\bf 99.6} / {\bf 99.6}}
& \multicolumn{1}{c}{98.2 / {\bf 99.5} / {\bf 99.5}}\\ \midrule
\multirow{3}{*}{\begin{tabular}[c]{@{}c@{}} CIFAR-100 \\(DenseNet) \end{tabular}} 
& \multicolumn{1}{c}{SVHN}  
& \multicolumn{1}{c}{82.9 / 73.0 / {\bf 84.9}}
& \multicolumn{1}{c}{96.1 / 95.2 / {\bf 97.5}}
& \multicolumn{1}{c}{90.9 / 88.7 / {\bf 91.9}}
& \multicolumn{1}{c}{98.5 / 97.5 / {\bf 99.0}}
& \multicolumn{1}{c}{89.0 / 91.1 / {\bf 95.1}}\\
& \multicolumn{1}{c}{TinyImageNet} 
& \multicolumn{1}{c}{85.8 / \bf{93.0}/ \bf{93.0}}
& \multicolumn{1}{c}{96.6 / \bf{98.5} / \bf{98.5}}
& \multicolumn{1}{c}{91.2 / \bf{94.1} / \bf{94.1}}
& \multicolumn{1}{c}{96.9 / \bf{98.5} / \bf{98.5}}
& \multicolumn{1}{c}{95.5 / \bf{98.5} / \bf{98.5}}\\
& \multicolumn{1}{c}{LSUN}
& \multicolumn{1}{c}{83.6 / {\bf 96.3} / {\bf 96.3}}
& \multicolumn{1}{c}{94.9 / {\bf98.9} / {\bf 98.9}}
& \multicolumn{1}{c}{89.9 / {\bf 95.7} / {\bf 95.7}}
& \multicolumn{1}{c}{95.7 / {\bf 99.0} / {\bf 99.0}}
& \multicolumn{1}{c}{93.0/ {\bf 98.8} / {\bf 98.8}}\\ \midrule
\multirow{3}{*}{\begin{tabular}[c]{@{}c@{}} SVHN \\(DenseNet) \end{tabular}} 
& \multicolumn{1}{c}{CIFAR-10}  
& \multicolumn{1}{c}{96.5 / {\bf 99.0} / {\bf 99.0}}
& \multicolumn{1}{c}{98.9 / {\bf 99.5} / {\bf 99.5}}
& \multicolumn{1}{c}{95.9 / {\bf 97.4} / {\bf 97.4}}
& \multicolumn{1}{c}{95.6 / {\bf 97.8} / {\bf 97.8}}
& \multicolumn{1}{c}{99.6 / {\bf 99.8} / {\bf 99.8}}\\
& \multicolumn{1}{c}{TinyImageNet}
& \multicolumn{1}{c}{99.8 / {\bf 100.0} / {\bf 100.0}}
& \multicolumn{1}{c}{99.9 / {\bf 100.0} / {\bf 100.0}}
& \multicolumn{1}{c}{98.8 / {\bf 99.4} / {\bf 99.4}}
& \multicolumn{1}{c}{99.6 / {\bf 99.8} / {\bf 99.8}}
& \multicolumn{1}{c}{{\bf 100.0} / {\bf 100.0} / {\bf 100.0}}\\
& \multicolumn{1}{c}{LSUN} 
& \multicolumn{1}{c}{{\bf100.0}/ {\bf 100.00}  / {\bf 100.00 }}
& \multicolumn{1}{c}{99.9 / {\bf100.0 }/ {\bf 100.0}}
& \multicolumn{1}{c}{99.3 / {\bf 99.7} / {\bf 99.7}}
& \multicolumn{1}{c}{99.7 / {\bf99.9 }/ {\bf 99.9}}
& \multicolumn{1}{c}{{\bf100.0} / {\bf100.0} / {\bf 100.0}}\\ \midrule
\multirow{3}{*}{\begin{tabular}[c]{@{}c@{}} CIFAR-10 \\(ResNet) \end{tabular}} 
& \multicolumn{1}{c}{SVHN}  
& \multicolumn{1}{c}{96.4 / 94.5  / {\bf 96.5}}
& \multicolumn{1}{c}{{\bf 99.1} / 98.9 / {\bf 99.1}}
& \multicolumn{1}{c}{{\bf 95.8} / 94.9 / {\bf 95.8}}
& \multicolumn{1}{c}{{\bf 99.6} / {\bf 99.6} / {\bf 99.6}}
& \multicolumn{1}{c}{{\bf 98.3} / 97.6 / {\bf 98.3}}\\
& \multicolumn{1}{c}{TinyImageNet} 
& \multicolumn{1}{c}{97.1 / {\bf 97.8} / {\bf 97.8}}
& \multicolumn{1}{c}{99.5 / {\bf 99.6} / {\bf 99.6}}
& \multicolumn{1}{c}{96.3 / {\bf 96.9} / {\bf 96.9}}
& \multicolumn{1}{c}{99.5 / {\bf 99.6} / {\bf 99.6}}
& \multicolumn{1}{c}{99.5 / {\bf 99.6} / {\bf 99.6}}\\
& \multicolumn{1}{c}{LSUN}  
& \multicolumn{1}{c}{98.9 / {\bf 99.0} / {\bf 99.0}}
& \multicolumn{1}{c}{99.7 / {\bf 99.8} / {\bf 99.8}}
& \multicolumn{1}{c}{97.7 / {\bf 97.8} / {\bf 97.8}}
& \multicolumn{1}{c}{99.7 / {\bf 99.8} / {\bf 99.8}}
& \multicolumn{1}{c}{99.7 / {\bf 99.8} / {\bf 99.8}}\\ \midrule
\multirow{3}{*}{\begin{tabular}[c]{@{}c@{}} CIFAR-100 \\(ResNet) \end{tabular}} 
& \multicolumn{1}{c}{SVHN} 
& \multicolumn{1}{c}{92.0 / 88.8 / {\bf 93.0}}
& \multicolumn{1}{c}{98.4 / 97.8 / {\bf 98.5}}
& \multicolumn{1}{c}{93.7 / 92.6 / {\bf 94.5}}
& \multicolumn{1}{c}{99.3 / 99.1 / {\bf 99.3}}
& \multicolumn{1}{c}{96.4 / 95.3 / {\bf 97.1}}\\
& \multicolumn{1}{c}{TinyImageNet} 
& \multicolumn{1}{c}{90.8 / {\bf 95.0} / 94.6}
& \multicolumn{1}{c}{98.2 / {\bf 98.9} / {\bf 98.9}}
& \multicolumn{1}{c}{93.3 / {\bf 95.0} / {\bf 95.0}}
& \multicolumn{1}{c}{98.1 / {\bf 98.9} / {\bf 98.9}}
& \multicolumn{1}{c}{98.2 / {\bf 98.9} / 98.8}\\
& \multicolumn{1}{c}{LSUN}
& \multicolumn{1}{c}{90.9 /{\bf 96.7} / 96.2}
& \multicolumn{1}{c}{98.2 / {\bf 99.1} / 99.0}
& \multicolumn{1}{c}{93.5 / 96.0 / {\bf 95.7}}
& \multicolumn{1}{c}{97.8 / {\bf 99.0} / 98.9}
& \multicolumn{1}{c}{98.4 / {\bf 98.8} / 98.6}\\ \midrule
\multirow{3}{*}{\begin{tabular}[c]{@{}c@{}} SVHN \\(ResNet) \end{tabular}} 
& \multicolumn{1}{c}{CIFAR-10}
& \multicolumn{1}{c}{98.5 / 99.3 / {\bf  99.4}}
& \multicolumn{1}{c}{99.3 / {\bf 99.6} /{\bf  99.6}}
& \multicolumn{1}{c}{96.9 / {\bf 97.7} / {\bf 97.7}}
& \multicolumn{1}{c}{97.0 / {\bf 98.3} / {\bf 98.3}}
& \multicolumn{1}{c}{99.7 / {\bf 99.9} / {\bf 99.9}}\\
& \multicolumn{1}{c}{TinyImageNet} 
& \multicolumn{1}{c}{99.9 / {\bf 100.0 }/ {\bf 100.0}}
& \multicolumn{1}{c}{99.9 / {\bf 100.0}  / 99.9}
& \multicolumn{1}{c}{99.1 / {\bf 99.5} / {\bf 99.3}}
& \multicolumn{1}{c}{99.1 / {\bf 99.8} / 99.7}
& \multicolumn{1}{c}{99.9 / {\bf 100.0} / {\bf 100.0}}\\
& \multicolumn{1}{c}{LSUN}
& \multicolumn{1}{c}{99.9 / {\bf 100.0} / {\bf 100.0}}
& \multicolumn{1}{c}{99.9 / {\bf 100.0} / {\bf 100.0}}
& \multicolumn{1}{c}{99.5 / {\bf 99.7} / {\bf 99.7}}
& \multicolumn{1}{c}{99.2 / {\bf 99.8} / {\bf 99.8}}
& \multicolumn{1}{c}{99.9 / {\bf 100.0} / {\bf 100.0}}\\ 
\bottomrule
\end{tabular}}
\vspace{+0.02in}
\caption{A comparison between our method and Mahalanobis \cite{lee2018simple} on the task of out-of-distribution detection for image classification of various in- and out-of-distribution data sets. The hyper-parameters were tuned using a validation set of in- and out-of-distribution datasets. The values presented here are percentages and the best results are indicated in bold.}
\label{tbl:out_main}
\vspace{-0.1cm}
\end{table*}
\end{small}

\begin{small}
\begin{table*}[ht] 
\centering
\resizebox{\textwidth}{!}{
\begin{tabular}{@{}ccclclclll@{}}
\toprule
\multirow{2}{*}{\begin{tabular}[c]{@{}c@{}} In-dist \\ (model) \end{tabular}}
 & \multirow{2}{*}{\begin{tabular}[c]{@{}c@{}} Out-of-dist\end{tabular}}
& \multicolumn{1}{c}{\begin{tabular}[c]{@{}c@{}} TNR at TPR 95\% \end{tabular}}
& \multicolumn{1}{c}{\begin{tabular}[c]{@{}c@{}} AUROC \end{tabular}}
& \multicolumn{1}{c}{\begin{tabular}[c]{@{}c@{}} Detection accuracy \end{tabular}} 
& \multicolumn{1}{c}{\begin{tabular}[c]{@{}c@{}} AUPR in \end{tabular}} 
& \multicolumn{1}{c}{\begin{tabular}[c]{@{}c@{}} AUPR out \end{tabular}} \\\cline{3-7}  
\multicolumn{1}{c}{} & \multicolumn{1}{c}{} & \multicolumn{5}{c}{ Mahalanobis \cite{lee2018simple}/ Res-Flow without pre-processing / Res-Flow with pre-processing} \\ \midrule
\multirow{3}{*}{\begin{tabular}[c]{@{}c@{}} CIFAR-10 \\(DenseNet) \end{tabular}} 
& \multicolumn{1}{c}{SVHN}
& \multicolumn{1}{c}{88.7 / {\bf 91.3}/ 86.1 }
& \multicolumn{1}{c}{97.6 / {\bf 98.3} / 97.3 }
& \multicolumn{1}{c}{92.4 / {\bf 93.8} / 91.6 }
& \multicolumn{1}{c}{94.7 / {\bf 96.6} / 94.3 }
& \multicolumn{1}{c}{99.0 / {\bf 99.3} / 99.0 }\\
& \multicolumn{1}{c}{TinyImageNet} 
& \multicolumn{1}{c}{88.6 / 96.0 / {\bf 96.1} }
& \multicolumn{1}{c}{97.5 / {\bf 99.1} / {\bf 99.1} }
& \multicolumn{1}{c}{92.2 / {\bf 95.6} / {\bf 95.6} }
& \multicolumn{1}{c}{97.4 / {\bf 99.1} / {\bf 99.1} }
& \multicolumn{1}{c}{97.7 / {\bf 99.2} / {\bf 99.2} }\\
& \multicolumn{1}{c}{LSUN} 
& \multicolumn{1}{c}{92.4 / 98.0 / {\bf 98.1} }
& \multicolumn{1}{c}{98.3 / {\bf 99.5} / {\bf 99.5} }
& \multicolumn{1}{c}{93.9 / 96.7 / {\bf 96.9} }
& \multicolumn{1}{c}{98.4 / {\bf 99.5} / {\bf 99.5} }
& \multicolumn{1}{c}{98.2 / 99.4 / {\bf 99.5} }\\ \midrule
\multirow{3}{*}{\begin{tabular}[c]{@{}c@{}} CIFAR-100 \\(DenseNet) \end{tabular}} 
& \multicolumn{1}{c}{SVHN} 
& \multicolumn{1}{c}{48.7 /{\bf 59.8} / 48.9 }
& \multicolumn{1}{c}{85.6 / {\bf 91.4} / 87.9}
& \multicolumn{1}{c}{80.0 / {\bf 83.7} / 80.0 }
& \multicolumn{1}{c}{63.7 /{\bf 82.9}/  74.9}
& \multicolumn{1}{c}{93.3 /{\bf 96.1} /  94.3}\\
& \multicolumn{1}{c}{TinyImageNet} 
& \multicolumn{1}{c}{80.4 / {\bf 91.7} / 91.5 }
& \multicolumn{1}{c}{92.7 / {\bf 98.3} / 98.1 }
& \multicolumn{1}{c}{88.0 / {\bf 93.6} / 93.4 }
& \multicolumn{1}{c}{87.4 / {\bf 98.3} / 98.0 }
& \multicolumn{1}{c}{94.5 / {\bf 98.4} / 98.3 }\\
& \multicolumn{1}{c}{LSUN}
& \multicolumn{1}{c}{83.8 / 95.4 / {\bf 95.8 }}
& \multicolumn{1}{c}{95.0 / {\bf 98.9} / {\bf 98.9 }}
& \multicolumn{1}{c}{90.0 / 95.3 / {\bf 95.4 }}
& \multicolumn{1}{c}{93.0 / {\bf 99.0} / 98.9 }
& \multicolumn{1}{c}{95.7 / {\bf 98.8} / {\bf 98.8 }}\\ \midrule
\multirow{3}{*}{\begin{tabular}[c]{@{}c@{}} SVHN \\(DenseNet) \end{tabular}} 
& \multicolumn{1}{c}{CIFAR-10} 
& \multicolumn{1}{c}{92.5 / {\bf 95.1} / 90.0 }
& \multicolumn{1}{c}{96.7 / {\bf 98.7} / 98.0}
& \multicolumn{1}{c}{93.8 / {\bf 95.3} / 93.4}
& \multicolumn{1}{c}{97.9 / 99.6 / {\bf 99.7}}
& \multicolumn{1}{c}{93.5 / {\bf 95.2} / 93.6}\\
& \multicolumn{1}{c}{TinyImageNet}   
& \multicolumn{1}{c}{99.1 / 99.7 / {\bf 99.9 }}
& \multicolumn{1}{c}{99.5 / {\bf 99.9} / {\bf 99.9}}
& \multicolumn{1}{c}{98.7 / {\bf 99.2} / 99.0}
& \multicolumn{1}{c}{99.6 / {\bf 100.0} / {\bf 100.0}}
& \multicolumn{1}{c}{99.2 / {\bf 99.8} / 99.6}\\
& \multicolumn{1}{c}{LSUN}   
& \multicolumn{1}{c}{99.7 / {\bf 100.0} / {\bf 100.0}}
& \multicolumn{1}{c}{99.8 / {\bf 100.0} / 99.9 }
& \multicolumn{1}{c}{99.1 / {\bf 99.5} / 99.4}
& \multicolumn{1}{c}{99.9 / 100.0 / {\bf 100.0}}
& \multicolumn{1}{c}{99.6 / {\bf 99.8} / 99.7 }\\ \midrule
\multirow{3}{*}{\begin{tabular}[c]{@{}c@{}} CIFAR-10 \\(ResNet) \end{tabular}} 
& \multicolumn{1}{c}{SVHN} 
& \multicolumn{1}{c}{87.5 / {\bf 91.0} / {\bf 91.0}}
& \multicolumn{1}{c}{97.4 / {\bf 98.2} / {\bf 98.2}}
& \multicolumn{1}{c}{91.8 / {\bf 93.8} / {\bf 93.8}}
& \multicolumn{1}{c}{93.8 / {\bf 96.6} / {\bf 96.6}}
& \multicolumn{1}{c}{98.9 / {\bf 99.1} / {\bf 99.1}}\\
& \multicolumn{1}{c}{TinyIageNet} 
& \multicolumn{1}{c}{93.1 / {\bf 98.0} / {\bf 98.0}}
& \multicolumn{1}{c}{97.9 / {\bf 99.6} / {\bf 99.6}}
& \multicolumn{1}{c}{94.1 / {\bf 97.0} / {\bf 97.0}}
& \multicolumn{1}{c}{95.4 / {\bf 99.6} / {\bf 99.6}}
& \multicolumn{1}{c}{98.4 / {\bf 99.6} / {\bf 99.6}}\\
& \multicolumn{1}{c}{LSUN} 
& \multicolumn{1}{c}{97.0 / {\bf 99.1} / {\bf 99.1}}
& \multicolumn{1}{c}{99.2 / {\bf 99.8} / {\bf 99.8}}
& \multicolumn{1}{c}{96.3 / {\bf 98.0} / {\bf 98.0}}
& \multicolumn{1}{c}{98.6 / {\bf 99.8} / {\bf 99.8}}
& \multicolumn{1}{c}{99.3 / {\bf 99.8} / {\bf 99.8}}\\ \midrule
\multirow{3}{*}{\begin{tabular}[c]{@{}c@{}} CIFAR-100 \\(ResNet) \end{tabular}}
& \multicolumn{1}{c}{SVHN}
& \multicolumn{1}{c}{66.5 / 57.2 / {\bf 74.1}}
& \multicolumn{1}{c}{93.2 / 90.7 / {\bf 95.1}}
& \multicolumn{1}{c}{85.9 / 83.8 / {\bf 88.7}}
& \multicolumn{1}{c}{86.4 / 80.5 / {\bf 90.4}}
& \multicolumn{1}{c}{96.6 / 95.4 / {\bf 97.5}}\\
& \multicolumn{1}{c}{TinyImageNet} 
& \multicolumn{1}{c}{56.7 / 71.6 / {\bf 77.5}}
& \multicolumn{1}{c}{76.9 / 86.8 / {\bf 90.1}}
& \multicolumn{1}{c}{77.6 / 84.3 / {\bf 87.1}}
& \multicolumn{1}{c}{63.0 / 74.8 / {\bf 79.6}}
& \multicolumn{1}{c}{83.7 / 90.4 / {\bf 93.1}}\\
& \multicolumn{1}{c}{LSUN}
& \multicolumn{1}{c}{38.4 / 61.1 / {\bf 70.4}}
& \multicolumn{1}{c}{66.2 / 82.0 / {\bf 87.2}}
& \multicolumn{1}{c}{69.5 / 80.1 / {\bf 84.1}}
& \multicolumn{1}{c}{54.6 / 70.0 / {\bf 75.9}}
& \multicolumn{1}{c}{73.9 / 86.5 / {\bf 90.5}}\\ \midrule
\multirow{3}{*}{\begin{tabular}[c]{@{}c@{}} SVHN \\(ResNet) \end{tabular}} 
& \multicolumn{1}{c}{CIFAR-10} 
& \multicolumn{1}{c}{95.2 / {\bf 97.1} / 96.6}
& \multicolumn{1}{c}{98.1 / {\bf 99.1} / 99.0}
& \multicolumn{1}{c}{95.2 / {\bf 96.1} / 95.8}
& \multicolumn{1}{c}{98.5 / {\bf 99.7} / {\bf 99.7}}
& \multicolumn{1}{c}{95.2 / {\bf 96.7} / 96.5}\\
& \multicolumn{1}{c}{TinyImageNet} 
& \multicolumn{1}{c}{99.3 /  {\bf 99.9} / {\bf 99.9}}
& \multicolumn{1}{c}{99.4 / {\bf 99.9 } / {\bf 99.9}}
& \multicolumn{1}{c}{98.9 / {\bf 99.3 } / 99.2}
& \multicolumn{1}{c}{98.9 / {\bf 99.9} / {\bf 99.9}}
& \multicolumn{1}{c}{98.3 / {\bf 99.7} / {\bf 99.7}}\\
& \multicolumn{1}{c}{LSUN}
& \multicolumn{1}{c}{99.9 / {\bf 100.0} / {\bf  100.0}}
& \multicolumn{1}{c}{99.9 / {\bf 100.0} / {\bf 100.0}}
& \multicolumn{1}{c}{99.5 / {\bf 99.7} / 99.6 }
& \multicolumn{1}{c}{99.9 / {\bf 100.0} / {\bf 100.0}}
& \multicolumn{1}{c}{98.8 / {\bf 99.7} / {\bf 99.7}}\\ 
\bottomrule
\end{tabular}}
\vspace{+0.02in}
\caption{A comparison between our method and Mahalanobis \cite{lee2018simple} on the task of out-of-distribution detection for image classification of various in- and out-of-distribution data sets. The hyper-parameters were tuned using strictly in-distribution and adversarial (FGSM) samples. The values presented here are percentages and the best results are indicated in bold.}
\label{tbl:out_val_out_candidate3}
\end{table*}
\end{small}

\section{Conclusions}\label{sec:Conclusions}
We proposed an efficient method for detecting out-of-distribution inputs for trained neural networks, without retraining the network or modifying its underlying architecture, nor compromising its classification accuracy on in-distribution data. Key to our approach is a novel deep generative model -- the residual flow, which is a principled extension of a Gaussian distribution model using a non-linear normalizing flow. This model, which is of independent interest, is most suitable for modelling distributions that are approximately Gaussian. Our method is general, and in principle can be applied to various data such as speech recognition and natural language processing. On deep networks trained for image classification, we obtain state-of-the-art out-of-distribution detection performance. 

\section{Acknowledgments}\label{sec:Acknowledgments}
This work is partly funded by the Israel Science Foundation (ISF-759/19) and the Open Philanthropy Project Fund, an advised fund of Silicon Valley Community Foundation.


{\small
\bibliographystyle{ieee_fullname}
\bibliography{Residual_Flow}

\begin{thebibliography}{10}\itemsep=-1pt

\bibitem{amodei2016concrete}
Dario Amodei, Chris Olah, Jacob Steinhardt, Paul Christiano, John Schulman, and
  Dan Man{\'e}.
\newblock Concrete problems in ai safety.
\newblock {\em arXiv preprint arXiv:1606.06565}, 2016.

\bibitem{an2015variational}
Jinwon An and Sungzoon Cho.
\newblock Variational autoencoder based anomaly detection using reconstruction
  probability.
\newblock {\em Special Lecture on IE}, 2(1), 2015.

\bibitem{ae16}
Jerone Andrews, Edward Morton, and Lewis Griffin.
\newblock Detecting anomalous data using auto-encoders.
\newblock {\em International Journal of Machine Learning and Computing}, 6:21,
  01 2016.

\bibitem{cao16hybrid}
Van~Loi Cao, Miguel Nicolau, and James Mcdermott.
\newblock A hybrid autoencoder and density estimation model for anomaly
  detection.
\newblock In {\em PPSN}, volume 9921, pages 717--726, 09 2016.

\bibitem{DBLP:journals/corr/abs-1901-03407}
Raghavendra Chalapathy and Sanjay Chawla.
\newblock Deep learning for anomaly detection: {A} survey.
\newblock {\em CoRR}, abs/1901.03407, 2019.

\bibitem{chen2017outlier}
Jinghui Chen, Saket Sathe, Charu Aggarwal, and Deepak Turaga.
\newblock Outlier detection with autoencoder ensembles.
\newblock In {\em Proceedings of the 2017 SIAM International Conference on Data
  Mining}, pages 90--98. SIAM, 2017.

\bibitem{Cho_learningphrase}
Kyunghyun Cho, Caglar Gulcehre, Universite~'De Montreal, Dzmitry Bahdanau,
  Fethi Bougares, Holger Schwenk, and Yoshua Bengio.
\newblock Learning phrase representations using rnn encoder–decoder for
  statistical machine translation.
\newblock In {\em EMNLP}, 2014.

\bibitem{chow1970optimum}
C Chow.
\newblock On optimum recognition error and reject tradeoff.
\newblock {\em IEEE Transactions on information theory}, 16(1):41--46, 1970.

\bibitem{davis2006relationship}
Jesse Davis and Mark Goadrich.
\newblock The relationship between precision-recall and roc curves.
\newblock In {\em Proceedings of the 23rd international conference on Machine
  learning}, pages 233--240. ACM, 2006.

\bibitem{deng2009imagenet}
Jia Deng, Wei Dong, Richard Socher, Li-Jia Li, Kai Li, and Li Fei-Fei.
\newblock Imagenet: A large-scale hierarchical image database.
\newblock In {\em 2009 IEEE conference on computer vision and pattern
  recognition}, pages 248--255. Ieee, 2009.

\bibitem{Dinh2014NICENI}
Laurent Dinh, David Krueger, and Yoshua Bengio.
\newblock Nice: Non-linear independent components estimation.
\newblock {\em CoRR}, abs/1410.8516, 2014.

\bibitem{Dinh2016DensityEU}
Laurent Dinh, Jascha Sohl-Dickstein, and Samy Bengio.
\newblock Density estimation using real nvp.
\newblock {\em ArXiv}, abs/1605.08803, 2016.

\bibitem{eliason1993maximum}
Scott~R Eliason.
\newblock {\em Maximum likelihood estimation: Logic and practice}.
\newblock Number~96. Sage, 1993.

\bibitem{Erfani:2016:HLA:2952005.2952200}
Sarah~M. Erfani, Sutharshan Rajasegarar, Shanika Karunasekera, and Christopher
  Leckie.
\newblock High-dimensional and large-scale anomaly detection using a linear
  one-class svm with deep learning.
\newblock {\em Pattern Recogn.}, 58(C):121--134, Oct. 2016.

\bibitem{geifman2017selective}
Yonatan Geifman and Ran El-Yaniv.
\newblock Selective classification for deep neural networks.
\newblock In {\em Advances in neural information processing systems}, pages
  4878--4887, 2017.

\bibitem{ghoting2008fast}
Amol Ghoting, Srinivasan Parthasarathy, and Matthew~Eric Otey.
\newblock Fast mining of distance-based outliers in high-dimensional datasets.
\newblock {\em Data Mining and Knowledge Discovery}, 16(3):349--364, 2008.

\bibitem{golan2018deep}
Izhak Golan and Ran El-Yaniv.
\newblock Deep anomaly detection using geometric transformations.
\newblock In {\em Advances in Neural Information Processing Systems}, pages
  9758--9769, 2018.

\bibitem{Goodfellow2014ExplainingAH}
Ian~J. Goodfellow, Jonathon Shlens, and Christian Szegedy.
\newblock Explaining and harnessing adversarial examples.
\newblock {\em CoRR}, abs/1412.6572, 2014.

\bibitem{he2016deep}
Kaiming He, Xiangyu Zhang, Shaoqing Ren, and Jian Sun.
\newblock Deep residual learning for image recognition.
\newblock In {\em Proceedings of the IEEE conference on computer vision and
  pattern recognition}, pages 770--778, 2016.

\bibitem{hendrycks2016baseline}
Dan Hendrycks and Kevin Gimpel.
\newblock A baseline for detecting misclassified and out-of-distribution
  examples in neural networks.
\newblock {\em \emph{ICLR}}, 2017.

\bibitem{hinton2012deep}
Geoffrey Hinton, Li Deng, Dong Yu, George Dahl, Abdel-rahman Mohamed, Navdeep
  Jaitly, Andrew Senior, Vincent Vanhoucke, Patrick Nguyen, Brian Kingsbury,
  et~al.
\newblock Deep neural networks for acoustic modeling in speech recognition.
\newblock {\em IEEE Signal processing magazine}, 29, 2012.

\bibitem{huang2017densely}
Gao Huang, Zhuang Liu, Laurens van~der Maaten, and Kilian~Q Weinberger.
\newblock Densely connected convolutional networks.
\newblock In {\em Proceedings of the IEEE Conference on Computer Vision and
  Pattern Recognition}, 2017.

\bibitem{Kingma2014AdamAM}
Diederik~P. Kingma and Jimmy Ba.
\newblock Adam: A method for stochastic optimization.
\newblock {\em CoRR}, abs/1412.6980, 2014.

\bibitem{kingma2018glow}
Durk~P Kingma and Prafulla Dhariwal.
\newblock Glow: Generative flow with invertible 1x1 convolutions.
\newblock In {\em Advances in Neural Information Processing Systems}, pages
  10215--10224, 2018.

\bibitem{krizhevsky2009learning}
Alex Krizhevsky, Geoffrey Hinton, et~al.
\newblock Learning multiple layers of features from tiny images.
\newblock Technical report, Citeseer, 2009.

\bibitem{krizhevsky2012imagenet}
Alex Krizhevsky, Ilya Sutskever, and Geoffrey~E Hinton.
\newblock Imagenet classification with deep convolutional neural networks.
\newblock In {\em Advances in neural information processing systems}, pages
  1097--1105, 2012.

\bibitem{lee2018simple}
Kimin Lee, Kibok Lee, Honglak Lee, and Jinwoo Shin.
\newblock A simple unified framework for detecting out-of-distribution samples
  and adversarial attacks.
\newblock In {\em Advances in Neural Information Processing Systems}, pages
  7167--7177, 2018.

\bibitem{liang2017principled}
Shiyu Liang, Yixuan Li, and R Srikant.
\newblock Principled detection of out-of-distribution examples in neural
  networks.
\newblock {\em \emph{ICLR}}, 2018.

\bibitem{moosavi2017universal}
Seyed-Mohsen Moosavi-Dezfooli, Alhussein Fawzi, Omar Fawzi, and Pascal
  Frossard.
\newblock Universal adversarial perturbations.
\newblock In {\em Proceedings of the IEEE conference on computer vision and
  pattern recognition}, pages 1765--1773, 2017.

\bibitem{netzer2011reading}
Yuval Netzer, Tao Wang, Adam Coates, Alessandro Bissacco, Bo Wu, and Andrew~Y
  Ng.
\newblock Reading digits in natural images with unsupervised feature learning.
\newblock 2011.

\bibitem{nguyen2015deep}
Anh Nguyen, Jason Yosinski, and Jeff Clune.
\newblock Deep neural networks are easily fooled: High confidence predictions
  for unrecognizable images.
\newblock In {\em Proceedings of the IEEE conference on computer vision and
  pattern recognition}, pages 427--436, 2015.

\bibitem{pimentel2014review}
Marco~AF Pimentel, David~A Clifton, Lei Clifton, and Lionel Tarassenko.
\newblock A review of novelty detection.
\newblock {\em Signal Processing}, 99:215--249, 2014.

\bibitem{pmlr-v80-ruff18a}
Lukas Ruff, Robert~A. Vandermeulen, Nico G{\"o}rnitz, Lucas Deecke, Shoaib~A.
  Siddiqui, Alexander Binder, Emmanuel M{\"u}ller, and Marius Kloft.
\newblock Deep one-class classification.
\newblock In {\em Proceedings of the 35th International Conference on Machine
  Learning}, volume~80, pages 4393--4402, 2018.

\bibitem{sastry2020zeroshot}
Chandramouli~S Sastry and Sageev Oore.
\newblock Zero-shot out-of-distribution detection with feature correlations,
  2020.

\bibitem{schlegl2017unsupervised}
Thomas Schlegl, Philipp Seeb{\"o}ck, Sebastian~M Waldstein, Ursula
  Schmidt-Erfurth, and Georg Langs.
\newblock Unsupervised anomaly detection with generative adversarial networks
  to guide marker discovery.
\newblock In {\em International Conference on Information Processing in Medical
  Imaging}, pages 146--157. Springer, 2017.

\bibitem{Scholkopf:2001:ESH:1119748.1119749}
Bernhard Sch\"{o}lkopf, John~C. Platt, John~C. Shawe-Taylor, Alex~J. Smola, and
  Robert~C. Williamson.
\newblock Estimating the support of a high-dimensional distribution.
\newblock {\em Neural Comput.}, 13(7):1443--1471, July 2001.

\bibitem{simonyan2014very}
Karen Simonyan and Andrew Zisserman.
\newblock Very deep convolutional networks for large-scale image recognition.
\newblock {\em ICLR}, 2015.

\bibitem{Song2018LearningNR}
Yunfu Song and Zhijian Ou.
\newblock Learning neural random fields with inclusive auxiliary generators.
\newblock {\em ArXiv}, abs/1806.00271, 2018.

\bibitem{suh2016echo}
Suwon Suh, Daniel~H Chae, Hyon-Goo Kang, and Seungjin Choi.
\newblock Echo-state conditional variational autoencoder for anomaly detection.
\newblock In {\em 2016 International Joint Conference on Neural Networks
  (IJCNN)}, pages 1015--1022. IEEE, 2016.

\bibitem{Szegedy2013IntriguingPO}
Christian Szegedy, Wojciech Zaremba, Ilya Sutskever, Joan Bruna, Dumitru Erhan,
  Ian~J. Goodfellow, and Rob Fergus.
\newblock Intriguing properties of neural networks.
\newblock {\em CoRR}, abs/1312.6199, 2013.

\bibitem{tabak2013family}
Esteban~G Tabak and Cristina~V Turner.
\newblock A family of nonparametric density estimation algorithms.
\newblock {\em Communications on Pure and Applied Mathematics}, 66(2):145--164,
  2013.

\bibitem{tabak2010density}
Esteban~G Tabak, Eric Vanden-Eijnden, et~al.
\newblock Density estimation by dual ascent of the log-likelihood.
\newblock {\em Communications in Mathematical Sciences}, 8(1):217--233, 2010.

\bibitem{Tax:2004:SVD:960091.960109}
David M.~J. Tax and Robert P.~W. Duin.
\newblock Support vector data description.
\newblock {\em Mach. Learn.}, 54(1):45--66, Jan. 2004.

\bibitem{wang2017safer}
William Wang, Angelina Wang, Aviv Tamar, Xi Chen, and Pieter Abbeel.
\newblock Safer classification by synthesis.
\newblock {\em CoRR}, abs/1711.08534, 2017.

\bibitem{yu2015lsun}
Fisher Yu, Ari Seff, Yinda Zhang, Shuran Song, Thomas Funkhouser, and Jianxiong
  Xiao.
\newblock Lsun: Construction of a large-scale image dataset using deep learning
  with humans in the loop.
\newblock {\em arXiv preprint arXiv:1506.03365}, 2015.

\bibitem{yu2020outofdistribution}
Sehun Yu, Donga Lee, and Hwanjo Yu.
\newblock Out-of-distribution image detection using the normalized compression
  distance, 2020.

\bibitem{DBLP:journals/corr/ZhaiCLZ16}
Shuangfei Zhai, Yu Cheng, Weining Lu, and Zhongfei Zhang.
\newblock Deep structured energy based models for anomaly detection.
\newblock {\em CoRR}, abs/1605.07717, 2016.

\end{thebibliography}
}

\clearpage
\setcounter{section}{0}

\noindent
\textbf{
\Large Supplementary Materials} 
\vspace{0.5cm}

\noindent This Supplementary material elaborates on the Residual flow algorithm and provides additional experiments.

\section{Comparison: Proposed approach vs. LDA (Mahalanobis) and GDA models}
In this section we examine the performance of our approach compared with LDA (Mahalanobis) and GDA models. In GDA, feature activations of neural networks are modeled using Gaussian discriminant analysis, i.e.~posterior of a Gaussian distribution with different mean and different covariance matrix for each class. Calculating the log-likelihood of this model is equivalent to measuring the Mahalanobis distance using a different covariance matrix for each class and adding to it the log-determinant of the class's precision matrix\footnote{We also compare our method to a GDA variant, which uses the per-class covariance matrix without the contribution of the log determinant of the precision matrix. The results are similar to those of the full GDA model, shown in Figures \ref{fig:AUROC_Densenet} and \ref{fig:AUROC_ResNet}.}. As in Section \ref{subsec:degeneracy}, if the feature vector is degenerate, we restrict our attention to its corresponding non-degenerate sub-vector. In LDA (Mahalanobis), the feature activations are modeled using linear discriminant analysis, i.e.~posterior of a Gaussian distribution with different mean but with an identical covariance matrix for all classes. We compare these models without employing input-preprocessing. Figure \ref{fig:DenseNet_roc} compares the performance of Residual Flow against LDA and GDA for the task of OOD detection. The models use ResNet trained on CIFAR-100 (in-distribution) and tested on various OOD datasets. The Figure shows that our method consistently improves upon the state-of-the-art (LDA model). Note that GDA may produce inferior results in some cases. Figures \ref{fig:AUROC_Densenet} and \ref{fig:AUROC_ResNet} show the AUROC comparison on various in- and out-of-distribution datasets of DenseNet and ResNet, respectively. The Figures affirm the observation that modeling feature activations with GDA can deteriorate performance in some cases, especially when the number of per-class training examples is limited - as in the case of CIFAR-100 (Figure \ref{fig:AUROC_Resnet_CIFAR100}). Estimating the empirical covariance matrix for each class (GDA) suffers from high variance, exacerbated in scenarios of a small training set. By learning the residual from the LDA model, our method overcomes this limitation, resulting in consistently superior performance over stat-of-the-art. 


\begin{figure} [ht] \centering
\subfigure[Residual Flow blocks in initialization and training.]
{
\includegraphics[clip, trim=0cm 6cm 0cm 5cm, width=0.5\textwidth]{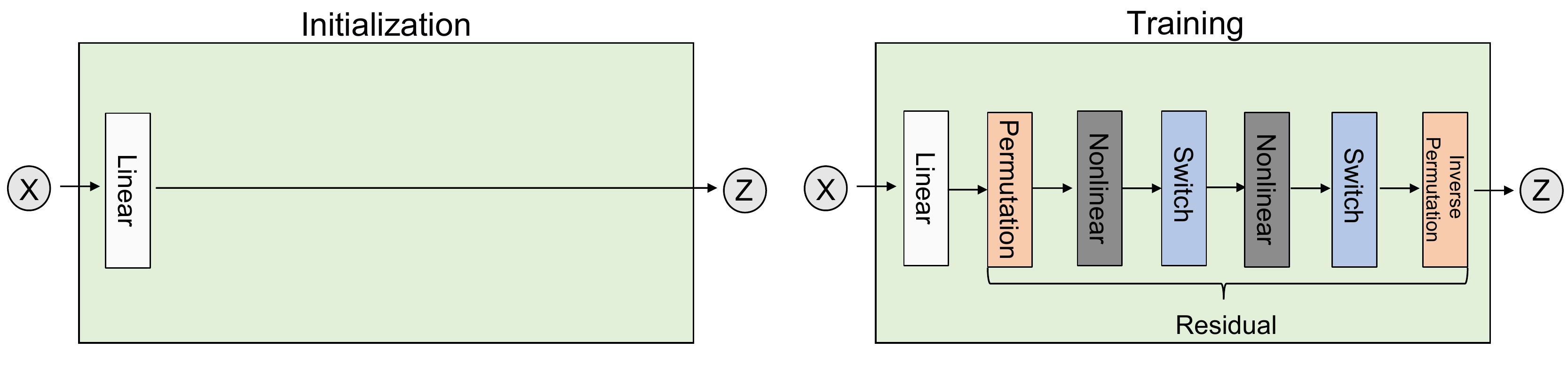}\label{figure:alternative_arc_flow_blocks}}
\,
\subfigure[The complete Residual Flow architecture $Z = f(X)$.]
{
\includegraphics[clip, trim=5cm 8cm 6cm 7cm, width=0.4\textwidth]{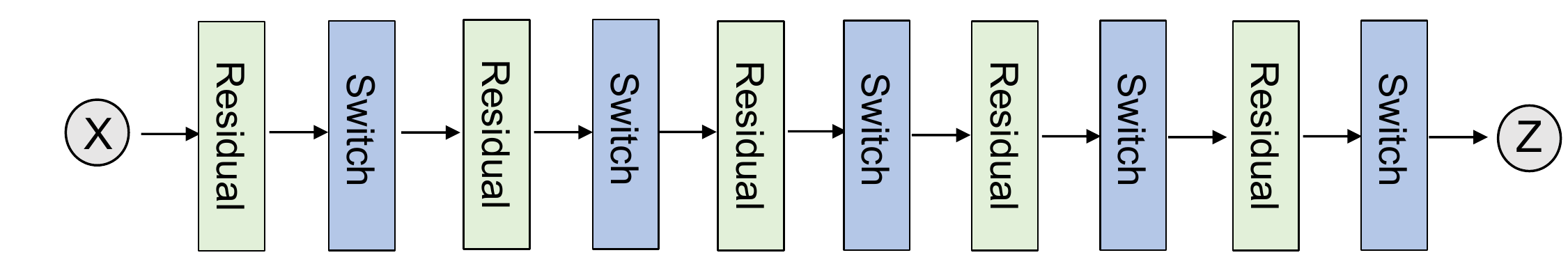} \label{figure:alternative_arc_flow}}
\caption{Residual Flow alternative architecture.}
\label{figure:alternative_arc}
\end{figure}

\begin{figure*} [ht] \centering
\subfigure[]
{
\includegraphics[width=0.31\textwidth]{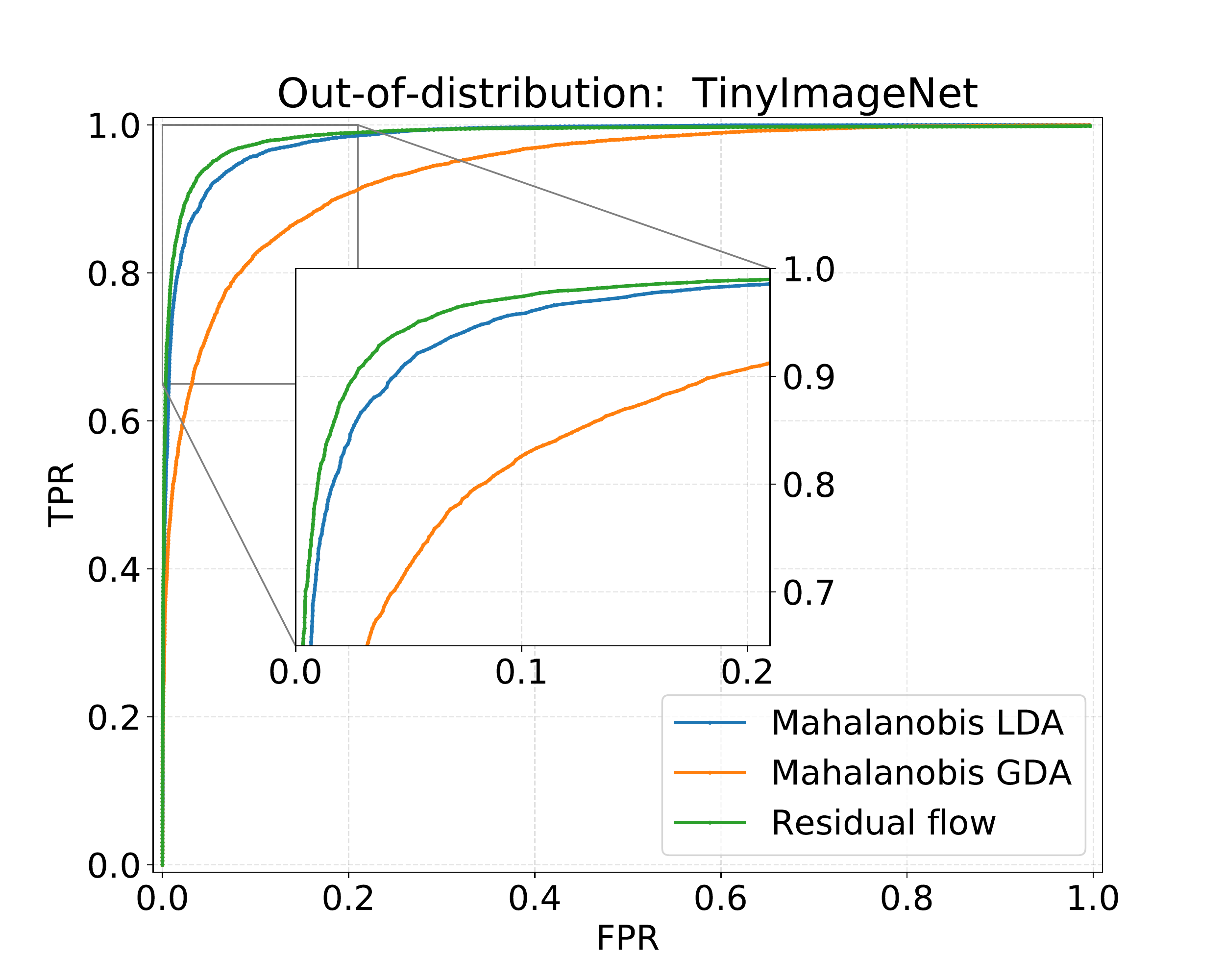} \label{fig:Resnet_CIFAR100_Iamgenet}} 
\,
\subfigure[]
{
\includegraphics[width=0.31\textwidth]{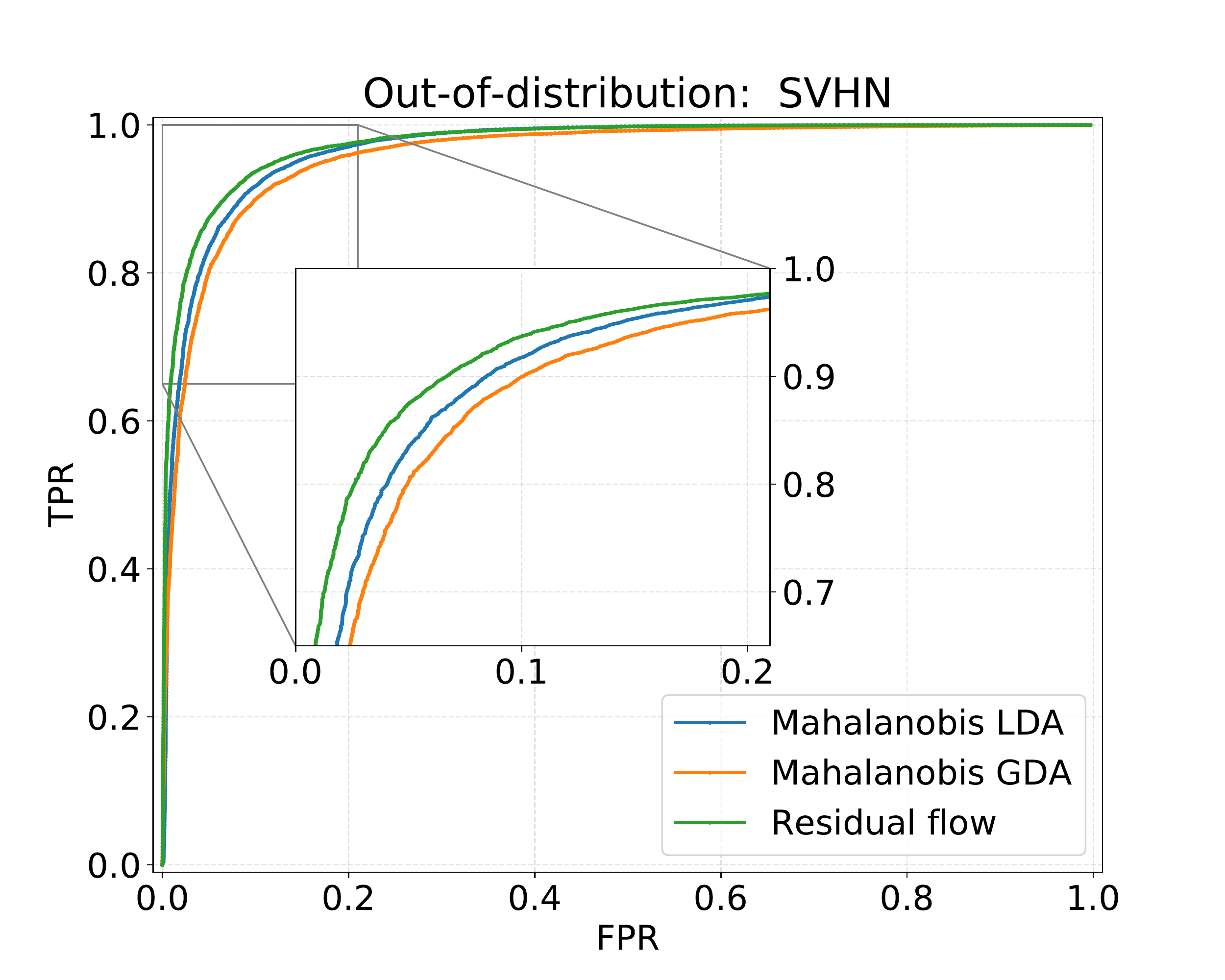} \label{fig:Resnet_CIFAR100_SVHN}}
\,
\subfigure[]
{
\includegraphics[width=0.31\textwidth]{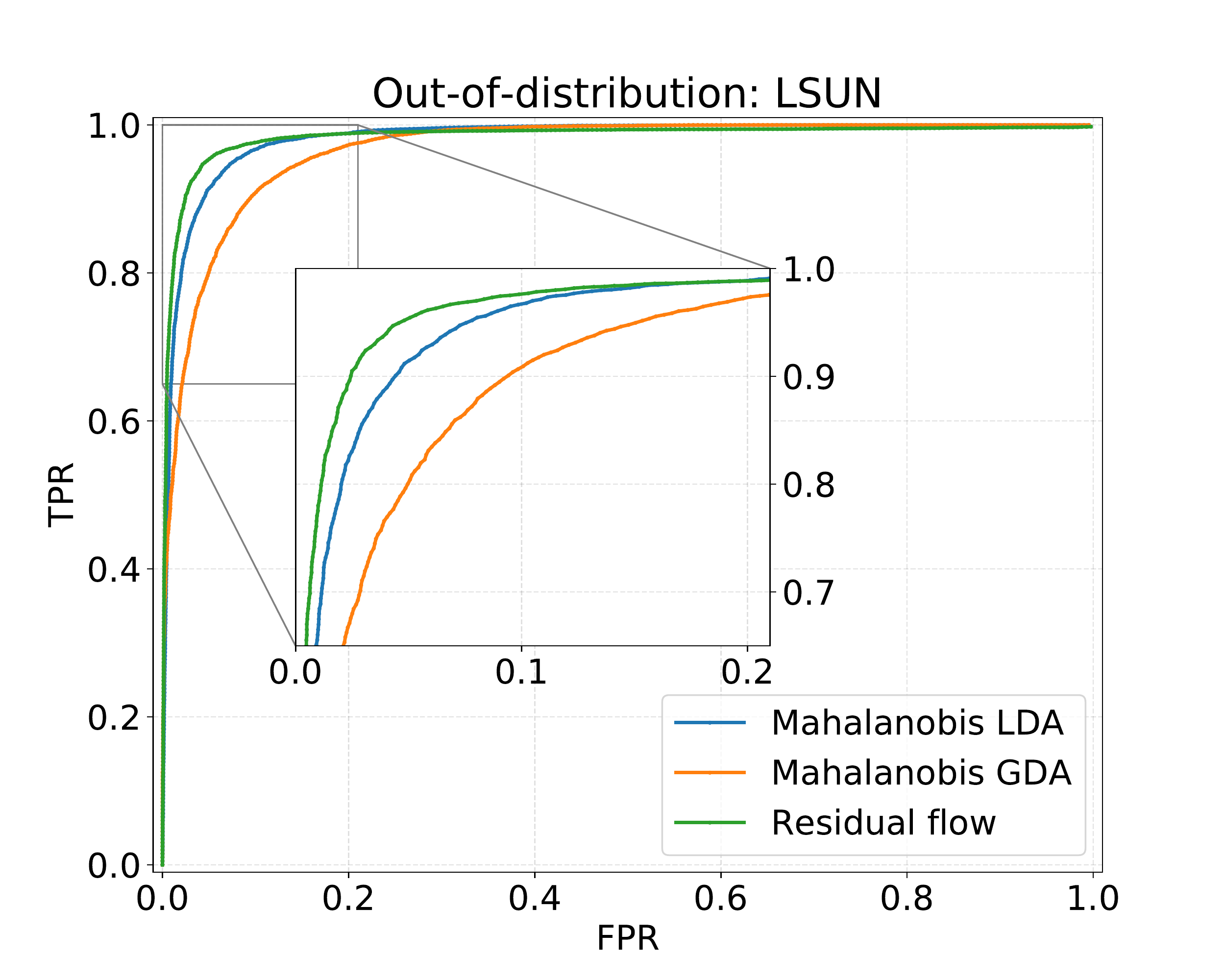} \label{fig:Resnet_CIFAR100_LSUN}}
\caption{
 Receiver operating characteristic (ROC) curve comparison of our method, Mahalanobis (LDA) and GDA for the task of OOD detection. The target network is ResNet trained on CIFAR-100. We compare the three models using the following out-of-distribution datasets: (a) TinyImageNet, (b) SVHN and (c) LSUN. 
 The x-axis and y-axis of the figures represent the false positive rate (FPR) and true positive rate (TPR), respectively.}
\label{fig:DenseNet_roc}
\end{figure*}

\begin{figure*} [ht] \centering
\subfigure[]
{
\includegraphics[width=0.31\textwidth]{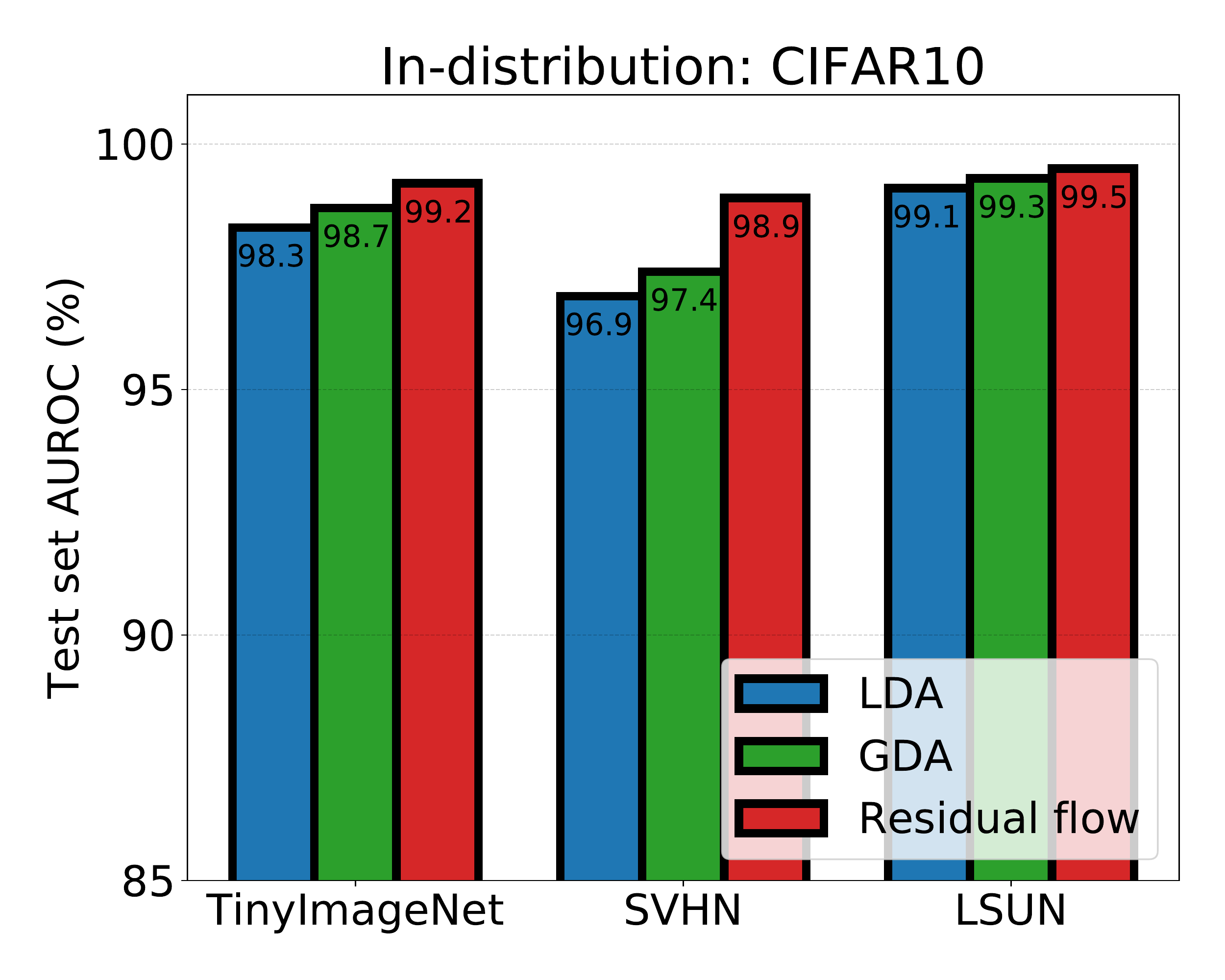} \label{fig:AUROC_Densenet_CIFAR10}}
\,
\subfigure[]
{
\includegraphics[width=0.31\textwidth]{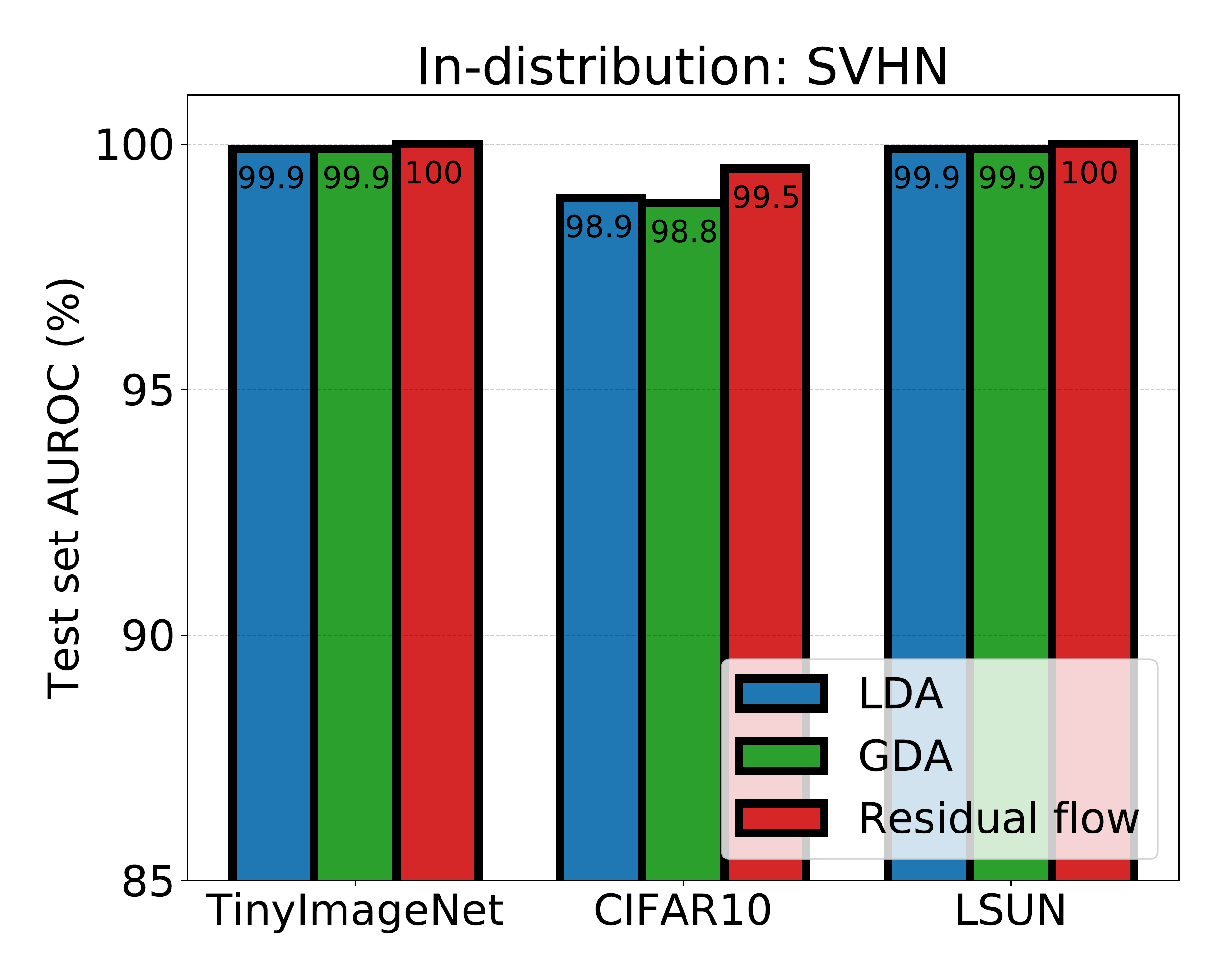} \label{fig:AUROC_Densenet_SVHN}}
\,
\subfigure[]
{
\includegraphics[width=0.31\textwidth]{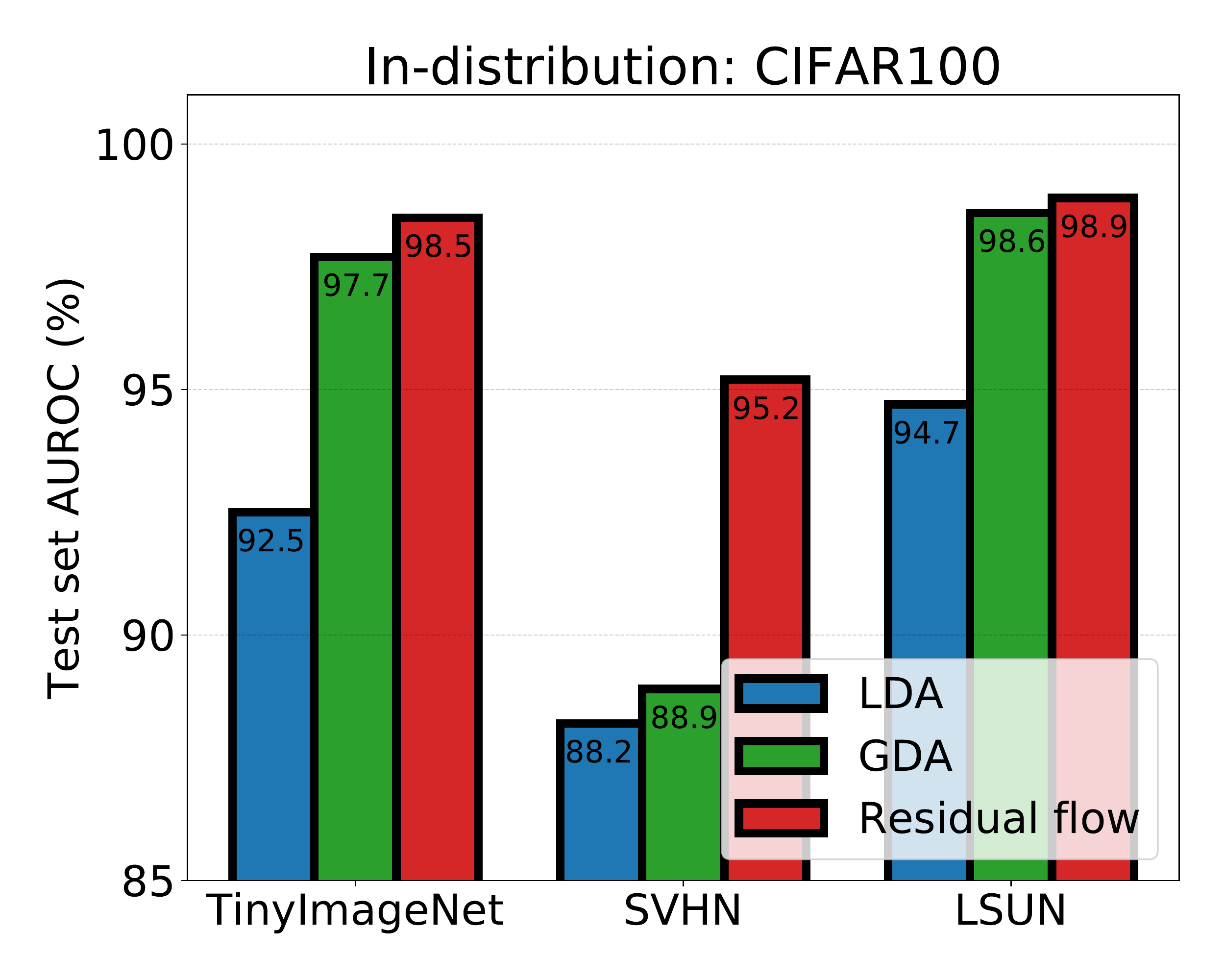} \label{fig:AUROC_Densenet_CIFAR100}}
\caption{
Area under the receiver operating characteristic (AUROC) (\%) curve comparison using DenseNet with 100 layers as a target network. We compare our results with LDA and GDA models across different in- and out-of-distribution datasets. The in-distribution datasets are: (a) CIFAR-10, (b) SVHN and (c) CIFAR-100, and the OOD datasets are presented on the x-axis of the figures. 
}
\label{fig:AUROC_Densenet}
\end{figure*}

\begin{figure*} [ht] \centering
\subfigure[]
{
\includegraphics[width=0.31\textwidth]{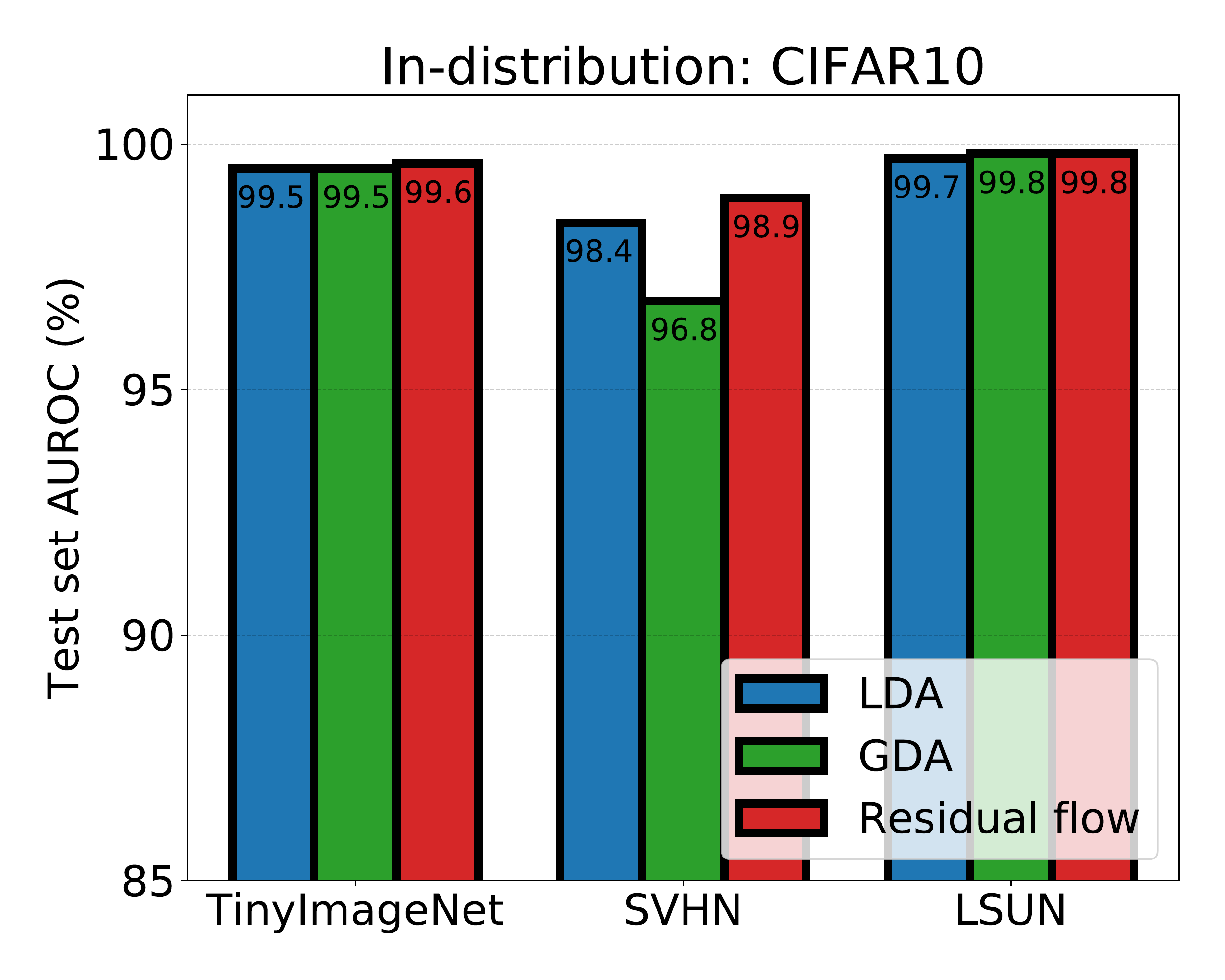}\label{fig:AUROC_Resnet_CIFAR10}}
\,
\subfigure[]
{
\includegraphics[width=0.31\textwidth]{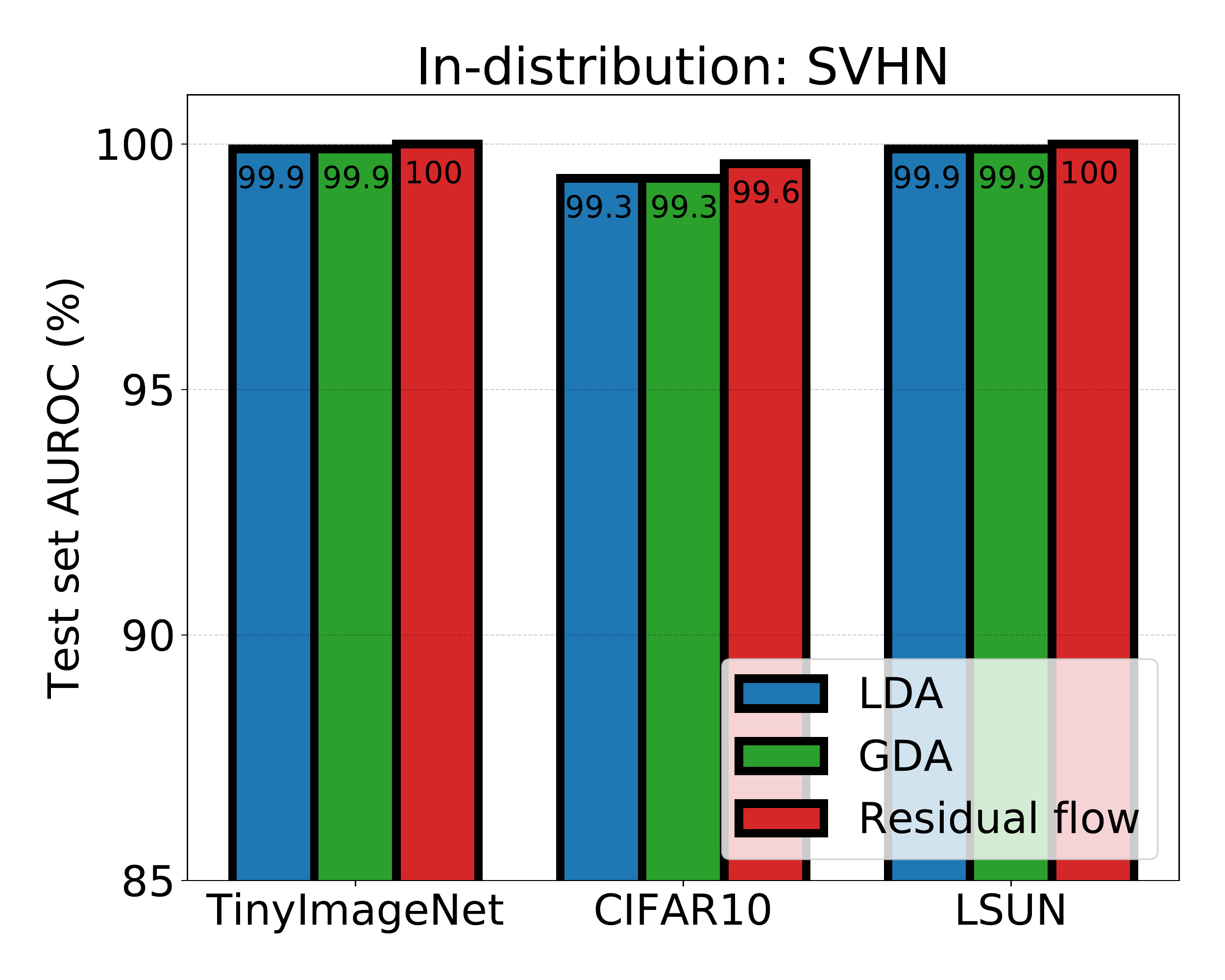} \label{fig:AUROC_Resnet_SVHN}}
\,
\subfigure[]
{
\includegraphics[width=0.31\textwidth]{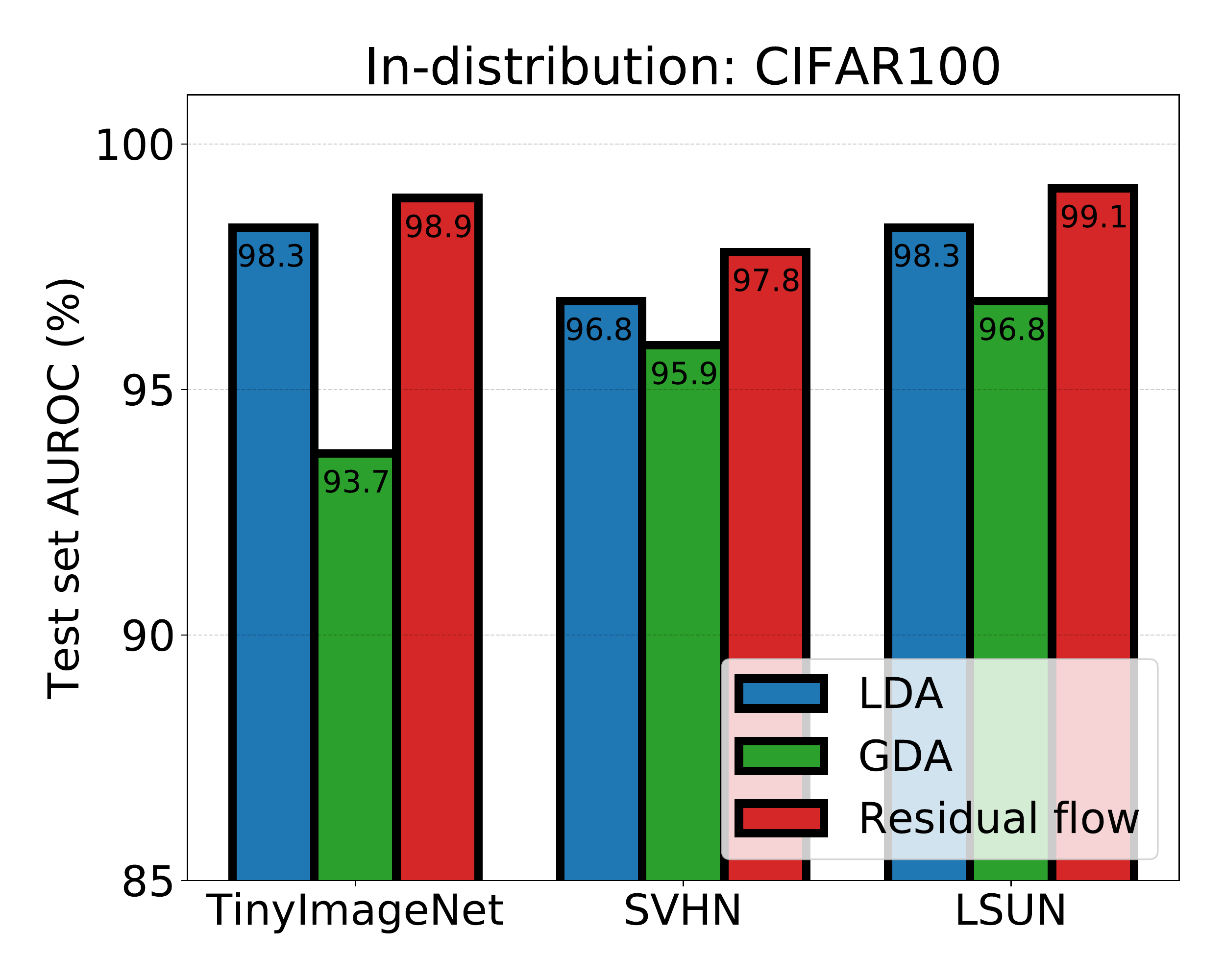} \label{fig:AUROC_Resnet_CIFAR100}}
\caption{
Area under the receiver operating characteristic (AUROC) (\%) curve comparison using ResNet with 34 layers as a target network. We compare our results with LDA and GDA models across different in- and out-of-distribution datasets. The in-distribution datasets are: (a) CIFAR-10, (b) SVHN and (c) CIFAR-100, and the OOD datasets are presented on the x-axis of the figures. }
\label{fig:AUROC_ResNet}
\end{figure*}

\section{Alternative Architecture}\label{Sec:diff_arch}
Composing a non-linear flow with linear flow blocks can be done in multiple ways. In this section, we describe an alternative residual flow architecture to the one presented in the main paper, and show that it obtains similar performance. The architecture comprises residual blocks, each composed of a single linear and several non-linear blocks. This architecture is more involved compared to the architecture in the main text, which comprises one linear flow block. We start by defining a linear flow block $f^{lin}_i$:
\begin{align}\label{eq:lin_flow}
x_1 = z_1,~~~ x_2 = x_2 \circ \exp(s_i) + t_i^T x_1,
\end{align}
where $s_i\in \mathbb{R}^{d/2}$, $t_i\in \mathbb{R}^{d/2 \times d/2}$, and $\circ$ denotes element-wise multiplication. Here $s_i$ and $t_i$ are scale and translation parameters. The scale parameters are crucial here, as without them, the Jacobian determinant is a constant $1$ by definition \cite{Dinh2014NICENI}, making the transformation volume preserving, and limiting the expressivity of the model. 
Next, we compose a residual flow block $f^{res}_i$:
\begin{equation*}
    f^{res}_i = f^{lin}_i \cdot p_i \cdot f^{non-lin}_{i,1} \cdot r \cdot f^{non-lin}_{i,2} \cdot r \cdot p_i^{-1},
\end{equation*} 
where the linear flow block $f^{lin}_i$ was defined above, $r$ is a switch permutation, $p_i$ is a permutation matrix and $p_i^{-1}$ is its inverse, and $f^{non-lin}_{i,1},f^{non-lin}_{i,2}$ are non-linear blocks as described in Eq. \eqref{eq:3} in the main paper. We then compose a residual flow model as:
\begin{equation*}
    f_{res} = f^{res}_1\cdot r \cdot f^{res}_2 \dots r \cdot f^{res}_k.
\end{equation*} 
Note that, from Eq. \eqref{eq:3} in the main paper, when $s_i(\cdot)=0$ and $t_i(\cdot)=0$, the non-linear terms $f^{non-lin}_i$ are just the identity, the permutation terms cancel each other, and in that case the residual flow $f^{res}$ is equivalent to the linear flow $f^{lin}$. Thus, we pre-train the residual flow by fixing the networks $s_i(\cdot)$ and $t_i(\cdot)$ to be zero, which is equivalent to fitting a Gaussian distribution model to our data\footnote{The stopping condition for this stage is when the Kullback–Leibler divergence measure between the linear flow $\hat{p}_X$ and the Gaussian distribution calculated using the empirical covariance $\Tilde{p}_X$ meets the criteria: $\mathcal{D}_{KL}\left(\hat{p}_X||\Tilde{p}_X\right)< 10^{-4}$.}. In practice, setting only the last layer of the networks for $s_i(\cdot)$ and $t_i(\cdot)$ to zero is enough, and we found this to perform better in fine tuning the non-linear terms, as most of the network is not initialized to zero. Then, we fine tune the non-linear components of the model to obtain a better fit to the data. Figures \ref{figure:alternative_arc} illustrates the alternative architecture. This architecture achieves similar results to that proposed in the main paper (see Tables \ref{tbl:In_diff_architecture} and \ref{tbl:out_diff_arvhitecture} for full comparison), but with the extra time overhead of training the linear flow. Hence, we chose to include the simpler architecture in the main paper. 

\begin{table*}[ht] 
\centering
\resizebox{\textwidth}{!}{
\begin{tabular}{@{}ccclclclll@{}}
\toprule
\multirow{2}{*}{\begin{tabular}[c]{@{}c@{}} In-dist \\ (model) \end{tabular}}
 & \multirow{2}{*}{\begin{tabular}[c]{@{}c@{}} Out-of-dist\end{tabular}}
& \multicolumn{1}{c}{\begin{tabular}[c]{@{}c@{}} TNR at TPR 95\% \end{tabular}}
& \multicolumn{1}{c}{\begin{tabular}[c]{@{}c@{}} AUROC \end{tabular}}
& \multicolumn{1}{c}{\begin{tabular}[c]{@{}c@{}} Detection accuracy \end{tabular}} 
& \multicolumn{1}{c}{\begin{tabular}[c]{@{}c@{}} AUPR in \end{tabular}} 
& \multicolumn{1}{c}{\begin{tabular}[c]{@{}c@{}} AUPR out \end{tabular}} \\\cline{3-7}  
\multicolumn{1}{c}{} & \multicolumn{1}{c}{} & \multicolumn{5}{c}{  Mahalanobis \cite{lee2018simple}/ Res-Flow without pre-processing / Res-Flow with pre-processing} \\ \midrule
\multirow{3}{*}{\begin{tabular}[c]{@{}c@{}} CIFAR-10 \\(DenseNet) \end{tabular}} 
& \multicolumn{1}{c}{SVHN} 
& \multicolumn{1}{c}{88.4 / 92.7 / {\bf 94.4}}
& \multicolumn{1}{c}{96.8 / 98.5 / {\bf 98.8}}
& \multicolumn{1}{c}{92.4 / 94.0 / {\bf 94.8}}
& \multicolumn{1}{c}{98.7 / 99.4 / {\bf 99.5}}
& \multicolumn{1}{c}{90.4 / 96.6 / {\bf 97.6}}\\
& \multicolumn{1}{c}{ImageNet}
& \multicolumn{1}{c}{95.4 / {\bf 97.3} / {\bf 97.3}}
& \multicolumn{1}{c}{98.8 / {\bf 99.3} / {\bf 99.3}}
& \multicolumn{1}{c}{95.3 / {\bf 96.3} / {\bf 96.3}}
& \multicolumn{1}{c}{98.9 / {\bf 99.3} / {\bf 99.3}}
& \multicolumn{1}{c}{98.7 / {\bf 99.3} / {\bf 99.3}}\\
& \multicolumn{1}{c}{LSUN}  
& \multicolumn{1}{c}{97.3 / {\bf 98.4} / {\bf 98.4}}
& \multicolumn{1}{c}{99.0 / {\bf 99.6} / {\bf 99.6}}
& \multicolumn{1}{c}{96.2 / {\bf 97.4} / {\bf 97.4}}
& \multicolumn{1}{c}{99.1 / {\bf 99.5} / {\bf 99.5}}
& \multicolumn{1}{c}{98.8 / {\bf 99.6} / {\bf 99.6}}\\ \midrule
\multirow{3}{*}{\begin{tabular}[c]{@{}c@{}} CIFAR-100 \\(DenseNet) \end{tabular}} 
& \multicolumn{1}{c}{SVHN}  
& \multicolumn{1}{c}{84.1 / 68.0 / {\bf 87.1}}
& \multicolumn{1}{c}{96.2 / 92.8 / {\bf 96.8}}
& \multicolumn{1}{c}{91.0 / 85.3 / {\bf 91.1}}
& \multicolumn{1}{c}{98.6 / 96.6 / {\bf 98.6}}
& \multicolumn{1}{c}{89.2 / 85.5 / {\bf 94.4}}\\
& \multicolumn{1}{c}{TinyImageNet} 
& \multicolumn{1}{c}{77.5 / 93.1 / \bf {93.4}}
& \multicolumn{1}{c}{95.4 / {\bf 98.5} / \bf {98.5}}
& \multicolumn{1}{c}{89.2 / 94.1 / \bf {94.3}}
& \multicolumn{1}{c}{95.8 / {\bf 98.4} / \bf {98.4}}
& \multicolumn{1}{c}{93.8 / {\bf 98.5} / \bf {98.5}}\\
& \multicolumn{1}{c}{LSUN} 
& \multicolumn{1}{c}{69.4 / {\bf 95.3} / {\bf 95.3}}
& \multicolumn{1}{c}{94.6 / {\bf 98.8} / {\bf 98.8}}
& \multicolumn{1}{c}{89.2 / {\bf 95.4} / {\bf 95.4}}
& \multicolumn{1}{c}{95.3 / {\bf 98.5} / {\bf 98.5}}
& \multicolumn{1}{c}{92.7/ {\bf 98.9} / {\bf 98.9}}\\ \midrule
\multirow{3}{*}{\begin{tabular}[c]{@{}c@{}} SVHN \\(DenseNet) \end{tabular}} 
& \multicolumn{1}{c}{CIFAR-10}  
& \multicolumn{1}{c}{95.8 / 96.9 / {\bf 97.5}}
& \multicolumn{1}{c}{98.8 / 99.2 / {\bf 99.3}}
& \multicolumn{1}{c}{95.8 / 96.7 / {\bf 97.0}}
& \multicolumn{1}{c}{95.4 / 96.9 / {\bf 97.4}}
& \multicolumn{1}{c}{99.6 / 99.7 / {\bf 99.8}}\\
& \multicolumn{1}{c}{TinyImageNet} 
& \multicolumn{1}{c}{99.6 / {\bf 99.8} / {\bf 99.8}}
& \multicolumn{1}{c}{99.9 / {\bf 99.9} / {\bf 99.9}}
& \multicolumn{1}{c}{98.9 / {\bf 99.2} / {\bf 99.2}}
& \multicolumn{1}{c}{99.6 / {\bf 99.8} / {\bf 99.8}}
& \multicolumn{1}{c}{100.0 / {\bf 100.0} / {\bf 100.0}}\\
& \multicolumn{1}{c}{LSUN}  
& \multicolumn{1}{c}{99.7 / {\bf 99.8} / {\bf 99.8}}
& \multicolumn{1}{c}{99.9 / {\bf 100.0} / {\bf 100.0}}
& \multicolumn{1}{c}{99.3 / {\bf 99.5} / {\bf 99.5}}
& \multicolumn{1}{c}{99.7 / {\bf 99.9} / {\bf 99.9}}
& \multicolumn{1}{c}{100.0 / {\bf 100.0} / {\bf 100.0}}\\ \midrule
\multirow{3}{*}{\begin{tabular}[c]{@{}c@{}} CIFAR-10 \\(ResNet) \end{tabular}} 
& \multicolumn{1}{c}{SVHN} 
& \multicolumn{1}{c}{96.2 / 91.7  / {\bf 96.5}}
& \multicolumn{1}{c}{99.1 / 98.3 / {\bf 99.2}}
& \multicolumn{1}{c}{95.8  / 93.5 / {\bf 95.9}}
& \multicolumn{1}{c}{99.6 / 99.3 / {\bf 99.7}}
& \multicolumn{1}{c}{98.3/ 96.4 / {\bf 98.3}}\\
& \multicolumn{1}{c}{TinyImageNet}  
& \multicolumn{1}{c}{97.4 / {\bf 98.9} / 98.3}
& \multicolumn{1}{c}{99.5 / {\bf 99.8} / 99.6}
& \multicolumn{1}{c}{96.3 / {\bf 97.6} / 97.1}
& \multicolumn{1}{c}{99.5 / {\bf 99.7} / 99.6}
& \multicolumn{1}{c}{99.5 / {\bf 99.7} / 99.6}\\
& \multicolumn{1}{c}{LSUN}
& \multicolumn{1}{c}{98.7 / {\bf 99.3} / 99.1}
& \multicolumn{1}{c}{99.7 / {\bf 99.8} /{\bf  99.8}}
& \multicolumn{1}{c}{97.5 / 97.8 / {\bf 97.9}}
& \multicolumn{1}{c}{99.7 / {\bf 99.8} / {\bf 99.8}}
& \multicolumn{1}{c}{99.7 / {\bf 99.8} / {\bf 99.8}}\\ \midrule
\multirow{3}{*}{\begin{tabular}[c]{@{}c@{}} CIFAR-100 \\(ResNet) \end{tabular}} 
& \multicolumn{1}{c}{SVHN}   
& \multicolumn{1}{c}{92.4 / 83.4 / {\bf 94.0}}
& \multicolumn{1}{c}{98.2 / 96.5 / {\bf 98.5}}
& \multicolumn{1}{c}{93.8 / 90.3 / {\bf 94.6}}
& \multicolumn{1}{c}{99.2 / 98.6 / {\bf 99.3}}
& \multicolumn{1}{c}{96.2 / 92.7 / {\bf 97.2}}\\
& \multicolumn{1}{c}{TinyImageNet} 
& \multicolumn{1}{c}{89.4 / {\bf 95.0} / {\bf 95.0}}
& \multicolumn{1}{c}{97.9 / 98.9 / {\bf 99.9}}
& \multicolumn{1}{c}{92.7 / {\bf 95.0} / {\bf 95.0}}
& \multicolumn{1}{c}{97.9 / {\bf 98.9} / {\bf 98.9}}
& \multicolumn{1}{c}{97.9 / {\bf 98.8} / {\bf 98.8}}\\
& \multicolumn{1}{c}{LSUN}
& \multicolumn{1}{c}{92.8 / {\bf 96.2} / {\bf 96.2}}
& \multicolumn{1}{c}{98.3 / {\bf 99.2} / 99.1}
& \multicolumn{1}{c}{93.9 / {\bf 95.6} / {\bf 95.6}}
& \multicolumn{1}{c}{97.9 / {\bf 99.0} / {\bf 99.0}}
& \multicolumn{1}{c}{98.5 / {\bf 99.2} / {\bf 99.2}}\\ \midrule
\multirow{3}{*}{\begin{tabular}[c]{@{}c@{}} SVHN \\(ResNet) \end{tabular}} 
& \multicolumn{1}{c}{CIFAR-10} 
& \multicolumn{1}{c}{97.6 / {\bf 98.6} / 98.5}
& \multicolumn{1}{c}{99.3 / {\bf 99.6} / {\bf 99.6}}
& \multicolumn{1}{c}{96.9 / {\bf 97.8} / 97.7}
& \multicolumn{1}{c}{97.3 / {\bf 98.2} / 98.1}
& \multicolumn{1}{c}{99.7 / {\bf 99.9} / {\bf 99.9}}\\
& \multicolumn{1}{c}{TinyImageNet} 
& \multicolumn{1}{c}{99.7 / {\bf 99.8} / {\bf 99.8}}
& \multicolumn{1}{c}{99.8 / {\bf 99.9} / {\bf 99.9}}
& \multicolumn{1}{c}{99.1 / {\bf 99.4} / {\bf 99.4}}
& \multicolumn{1}{c}{99.5 / {\bf 99.7} / {\bf 99.7}}
& \multicolumn{1}{c}{99.9 / {\bf 100.0} / {\bf 100.0}}\\
& \multicolumn{1}{c}{LSUN} 
& \multicolumn{1}{c}{99.8 / {\bf 99.9} / {\bf 99.9}}
& \multicolumn{1}{c}{99.9 / {\bf 100.0} / {\bf 100.0}}
& \multicolumn{1}{c}{99.6 / {\bf 99.7} / {\bf 99.7}}
& \multicolumn{1}{c}{99.6 / {\bf 99.7} / {\bf 99.7}}
& \multicolumn{1}{c}{99.9 / {\bf 100.0} / {\bf 100.0}}\\ 
\bottomrule
\end{tabular}}
\vspace{+0.02in}
\caption{A comparison between residual flow implemented using the architecture described in Section \ref{Sec:diff_arch} and Mahalanobis \cite{lee2018simple} on the task of out-of-distribution detection for image classification of various in- and out-of-distribution data sets. The hyper-parameters were tuned using a validation set of in- and out-of-distribution datasets. The values presented here are percentages and the best results are indicated in bold.}
\label{tbl:In_diff_architecture}
\end{table*}

\begin{table*}[ht] 
\centering
\resizebox{\textwidth}{!}{
\begin{tabular}{@{}ccclclclll@{}}
\toprule
\multirow{2}{*}{\begin{tabular}[c]{@{}c@{}} In-dist \\ (model) \end{tabular}}
 & \multirow{2}{*}{\begin{tabular}[c]{@{}c@{}} Out-of-dist\end{tabular}}
& \multicolumn{1}{c}{\begin{tabular}[c]{@{}c@{}} TNR at TPR 95\% \end{tabular}}
& \multicolumn{1}{c}{\begin{tabular}[c]{@{}c@{}} AUROC \end{tabular}}
& \multicolumn{1}{c}{\begin{tabular}[c]{@{}c@{}} Detection accuracy \end{tabular}} 
& \multicolumn{1}{c}{\begin{tabular}[c]{@{}c@{}} AUPR in \end{tabular}} 
& \multicolumn{1}{c}{\begin{tabular}[c]{@{}c@{}} AUPR out \end{tabular}} \\\cline{3-7}  
\multicolumn{1}{c}{} & \multicolumn{1}{c}{} & \multicolumn{5}{c}{ Mahalanobis \cite{lee2018simple}/ Res-Flow without pre-processing / Res-Flow with pre-processing} \\ \midrule
\multirow{3}{*}{\begin{tabular}[c]{@{}c@{}} CIFAR-10 \\(DenseNet) \end{tabular}} 
& \multicolumn{1}{c}{SVHN} 
& \multicolumn{1}{c}{89.6 / 75.6 / {\bf 91.7} }
& \multicolumn{1}{c}{97.6 / 94.9 / {\bf 98.0} }
& \multicolumn{1}{c}{92.6 / 87.8 / {\bf 93.4} }
& \multicolumn{1}{c}{94.5 / 88.7 / {\bf 96.2} }
& \multicolumn{1}{c}{99.0 / 97.9 / {\bf 99.1} }\\
& \multicolumn{1}{c}{TinyImageNet}  
& \multicolumn{1}{c}{94.9 / {\bf 97.3} / {\bf 97.3} }
& \multicolumn{1}{c}{98.8 / {\bf 99.3} / {\bf 99.3} }
& \multicolumn{1}{c}{95.0 / {\bf 96.4} / {\bf 96.4} }
& \multicolumn{1}{c}{98.7 / {\bf 99.4} / {\bf 99.4} }
& \multicolumn{1}{c}{98.8 / {\bf 99.3} / {\bf 99.3} }\\
& \multicolumn{1}{c}{LSUN}  
& \multicolumn{1}{c}{97.2 / {\bf 98.4} / {\bf 98.4} }
& \multicolumn{1}{c}{99.2 / {\bf 99.6} / {\bf 99.6} }
& \multicolumn{1}{c}{96.2 / {\bf 97.4} / {\bf 97.4} }
& \multicolumn{1}{c}{99.3 / {\bf 99.6} / {\bf 99.6} }
& \multicolumn{1}{c}{99.2 / {\bf 99.6} / {\bf 99.6} }\\ \midrule
\multirow{3}{*}{\begin{tabular}[c]{@{}c@{}} CIFAR-100 \\(DenseNet) \end{tabular}} 
& \multicolumn{1}{c}{SVHN}
& \multicolumn{1}{c}{ 62.2 / 65.4 / {\bf 86.3} }
& \multicolumn{1}{c}{91.8 / 91.7 / {\bf 96.4} }
& \multicolumn{1}{c}{84.6 / 84.2 / {\bf 90.7} }
& \multicolumn{1}{c}{82.6 / 83.9 / {\bf 94.0}}
& \multicolumn{1}{c}{95.8 / 96.0 / {\bf 98.3}}\\
& \multicolumn{1}{c}{TinyImageNet} 
& \multicolumn{1}{c}{87.2 / {\bf 92.4} / 91.2 }
& \multicolumn{1}{c}{97.0 / {\bf 98.3} / 98.1 }
& \multicolumn{1}{c}{91.8 / {\bf 93.7} / 93.4 }
& \multicolumn{1}{c}{96.2 / {\bf 98.2} / 98.1 }
& \multicolumn{1}{c}{97.1 / {\bf 98.3} / 98.2 }\\
& \multicolumn{1}{c}{LSUN} 
& \multicolumn{1}{c}{91.4 / 95.1 / {\bf 95.3 }}
& \multicolumn{1}{c}{97.9 / 98.7 / {\bf 98.8 }}
& \multicolumn{1}{c}{93.8 / 95.1 / {\bf 95.3 }}
& \multicolumn{1}{c}{98.1 / 98.5 / {\bf 98.6 }}
& \multicolumn{1}{c}{97.6 / {\bf 98.9} / {\bf 98.9 }}\\ \midrule
\multirow{3}{*}{\begin{tabular}[c]{@{}c@{}} SVHN \\(DenseNet) \end{tabular}} 
& \multicolumn{1}{c}{CIFAR-10}  
& \multicolumn{1}{c}{97.5 / 96.2 / {\bf 96.5} }
& \multicolumn{1}{c}{98.8 / 98.9 / {\bf 99.1}}
& \multicolumn{1}{c}{96.3 / 96.1 / {\bf 96.3}}
& \multicolumn{1}{c}{99.6 /{\bf 99.7} / {\bf 99.7}}
& \multicolumn{1}{c}{95.1 / 96.0 / {\bf 96.5}}\\
& \multicolumn{1}{c}{TinyImageNet}
& \multicolumn{1}{c}{99.9/ 99.7 / {\bf 99.9 }}
& \multicolumn{1}{c}{99.8 / 99.9 / {\bf 99.9}}
& \multicolumn{1}{c}{98.9 / {\bf 99.1} / 99.0}
& \multicolumn{1}{c}{99.9 / 99.8 / {\bf 99.9 }}
& \multicolumn{1}{c}{99.5 / {\bf 100.0} / 99.6}\\
& \multicolumn{1}{c}{LSUN}  
& \multicolumn{1}{c}{100.0 / 99.8 / {\bf 100.0 }}
& \multicolumn{1}{c}{99.9 / {\bf 99.9} / {\bf 99.9 }}
& \multicolumn{1}{c}{99.2 / {\bf 99.4} / 99.3}
& \multicolumn{1}{c}{99.9 / 99.8 / {\bf 100.0}}
& \multicolumn{1}{c}{99.6 / {\bf 99.9} / 99.7 }\\ \midrule
\multirow{3}{*}{\begin{tabular}[c]{@{}c@{}} CIFAR-10 \\(ResNet) \end{tabular}} 
& \multicolumn{1}{c}{SVHN}
& \multicolumn{1}{c}{75.8 /76.0 / {\bf  95.7 }}
& \multicolumn{1}{c}{95.5 / 94.2 / {\bf 98.9 }}
& \multicolumn{1}{c}{89.1 / 87.1 / {\bf 95.6}}
& \multicolumn{1}{c}{91.0 / 97.4 /  {\bf 99.4 }}
& \multicolumn{1}{c}{98.0 / 89.3 / {\bf 98.0 }}\\
& \multicolumn{1}{c}{TinyIageNet}
& \multicolumn{1}{c}{95.5 / {\bf 98.8}/ 98.5  }
& \multicolumn{1}{c}{99.0 / {\bf 99.7} / 99.6 }
& \multicolumn{1}{c}{95.4 / {\bf 97.4} / 97.1 }
& \multicolumn{1}{c}{98.6 / {\bf 99.7} / 99.6 }
& \multicolumn{1}{c}{99.1 / {\bf 99.7} / 99.6 }\\
& \multicolumn{1}{c}{LSUN} 
& \multicolumn{1}{c}{98.1 / 99.5 / {\bf 99.6 }}
& \multicolumn{1}{c}{99.5 / 99.8 / {\bf 99.9}}
& \multicolumn{1}{c}{97.2 / 98.2 / {\bf 98.5 }}
& \multicolumn{1}{c}{99.5 /{\bf  99.8} / {\bf 99.8 }}
& \multicolumn{1}{c}{99.5 / 99.8 / {\bf 99.9 }}\\ \midrule
\multirow{3}{*}{\begin{tabular}[c]{@{}c@{}} CIFAR-100 \\(ResNet) \end{tabular}}
& \multicolumn{1}{c}{SVHN}
& \multicolumn{1}{c}{41.9 / 59.1 / {\bf 66.8 }}
& \multicolumn{1}{c}{84.4 / 90.6 / {\bf 92.4 }}
& \multicolumn{1}{c}{76.5 / 82.6 / {\bf 84.9 }}
& \multicolumn{1}{c}{69.1 / 81.0 / {\bf 83.3 }}
& \multicolumn{1}{c}{92.7 / 95.8 / {\bf 96.8 }}\\
& \multicolumn{1}{c}{TinyImageNet}    
& \multicolumn{1}{c}{70.3 / 73.9 / {\bf 77.3} }
& \multicolumn{1}{c}{87.9 / 88.8 / {\bf 89.6} }
& \multicolumn{1}{c}{84.6 / 84.5 / {\bf 86.7} }
& \multicolumn{1}{c}{76.8 / 78.8 / {\bf 79.2} }
& \multicolumn{1}{c}{90.7 / 88.8 / {\bf 92.5} }\\
& \multicolumn{1}{c}{LSUN}   
& \multicolumn{1}{c}{56.6 / 66.1 / {\bf 68.1} }
& \multicolumn{1}{c}{82.3 / {\bf 89.1} / 86.5 }
& \multicolumn{1}{c}{79.7 / {\bf 85.6} / 83.4 }
& \multicolumn{1}{c}{70.3 / {\bf 79.1} / 75.8 }
& \multicolumn{1}{c}{85.3 / 89.2 / {\bf 89.7} }\\ \midrule
\multirow{3}{*}{\begin{tabular}[c]{@{}c@{}} SVHN \\(ResNet) \end{tabular}} 
& \multicolumn{1}{c}{CIFAR-10} 
& \multicolumn{1}{c}{94.1 / {\bf 98.4} / 97.6 }
& \multicolumn{1}{c}{97.6 / {\bf 99.5} / 99.2 }
& \multicolumn{1}{c}{94.6 / {\bf 97.5} / 96.4 }
& \multicolumn{1}{c}{98.1 / {\bf 99.9} / 99.7 }
& \multicolumn{1}{c}{94.7 / {\bf 97.9} / 97.3 }\\
& \multicolumn{1}{c}{TinyImageNet} 
& \multicolumn{1}{c}{99.2 / {\bf  99.9} / {\bf 99.9}}
& \multicolumn{1}{c}{99.3 / {\bf 99.9} / {\bf 99.9 }}
& \multicolumn{1}{c}{98.8 / {\bf 99.5} / {\bf 99.5} }
& \multicolumn{1}{c}{98.8 / {\bf 99.7} / {\bf 99.9}}
& \multicolumn{1}{c}{98.3 / {\bf 100.0} / 99.6 }\\
& \multicolumn{1}{c}{LSUN}
& \multicolumn{1}{c}{99.9 / 99.9 / {\bf  100.0 }}
& \multicolumn{1}{c}{99.9 / {\bf 100.0} / 99.9 }
& \multicolumn{1}{c}{99.5 / {\bf 99.7} / 99.6 }
& \multicolumn{1}{c}{99.9 / 99.7 / {\bf 99.9 }}
& \multicolumn{1}{c}{98.8 / {\bf 100.0} / {\bf 100.0 }}\\ 
\bottomrule
\end{tabular}}
\vspace{+0.02in}
\caption{A comparison between residual flow implemented using the architecture described in Section \ref{Sec:diff_arch} and Mahalanobis \cite{lee2018simple} on the task of out-of-distribution detection for image classification of various in- and out-of-distribution data sets. The hyper-parameters were tuned using strictly in-distribution and adversarial (FGSM) samples. The values presented here are percentages and the best results are indicated in bold.}
\label{tbl:out_diff_arvhitecture}
\end{table*}

\end{document}